\def\eqref#1{equation~\ref{#1}}
\def\1{\bm{1}}
\DeclareMathAlphabet{\mathsfit}{\encodingdefault}{\sfdefault}{m}{sl}
\SetMathAlphabet{\mathsfit}{bold}{\encodingdefault}{\sfdefault}{bx}{n}
\def\fr#1#2{{\textstyle\frac{#1}{#2}}}
\newlist{todolist}{itemize}{2}
\setlist[todolist]{label=$\square$}
\newcommand{\freqcolor}[2]{%
  \textcolor{red!\the\numexpr100-#1\relax!blue}{#2}%
}
\definecolor{customblue}{RGB}{50, 116, 161}
\definecolor{customorange}{RGB}{225, 129, 44}
\title{
The Fair Language Model Paradox
}
\author{
Andrea Pinto \\
MIT / ETH Zurich \\
\texttt{pintoa@mit.edu} \\
\And
Tomer Galanti \\
Texas A\&M University \\
\texttt{galanti@tamu.edu} \\
\And
Randall Balestriero \\
Brown University \\
\texttt{rbalestr@brown.edu} \\
}
\begin{document}

\maketitle

\begin{abstract}
Large Language Models (LLMs) are widely deployed in real-world applications, yet little is known about their training dynamics at the token level. Evaluation typically relies on aggregated training loss, measured at the batch level, which overlooks subtle per-token biases arising from (i) varying token-level dynamics and (ii) structural biases introduced by hyperparameters. While weight decay is commonly used to stabilize training, we reveal that it silently introduces performance biases detectable only at the token level. In fact, we empirically show across different dataset sizes, model architectures and sizes ranging from $270$M to $3$B parameters that as weight decay increases, low-frequency tokens are disproportionately depreciated. This is particularly concerning, as these neglected low-frequency tokens represent the vast majority of the token distribution in most languages, calling for novel regularization techniques that ensure fairness across all available tokens.
\end{abstract}

\begin{figure}[!ht]
\centering
\begin{tabular}{c@{\hskip 0.1cm}c}
\includegraphics[width=0.49\linewidth]{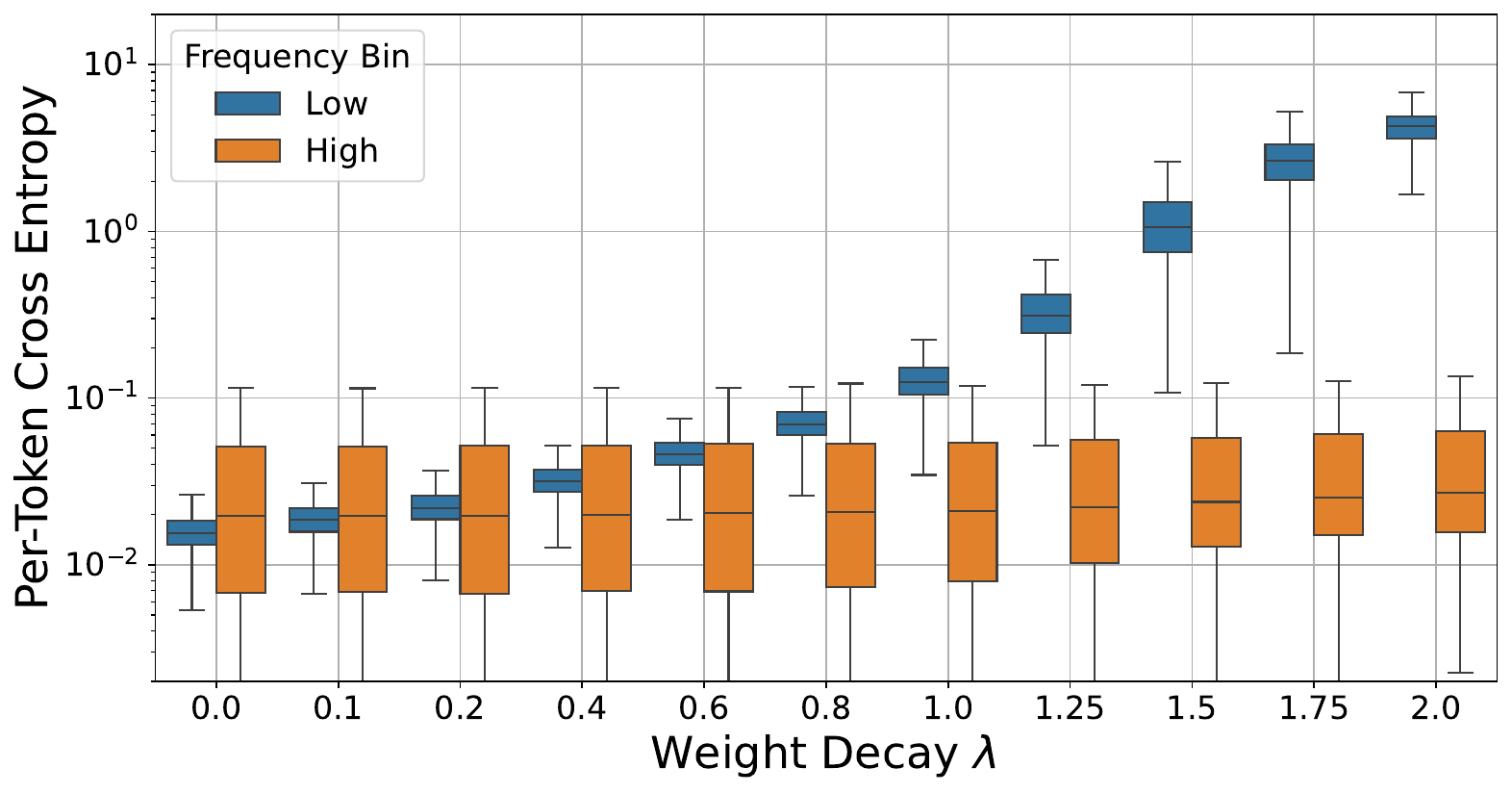} &
\includegraphics[width=0.49\linewidth]{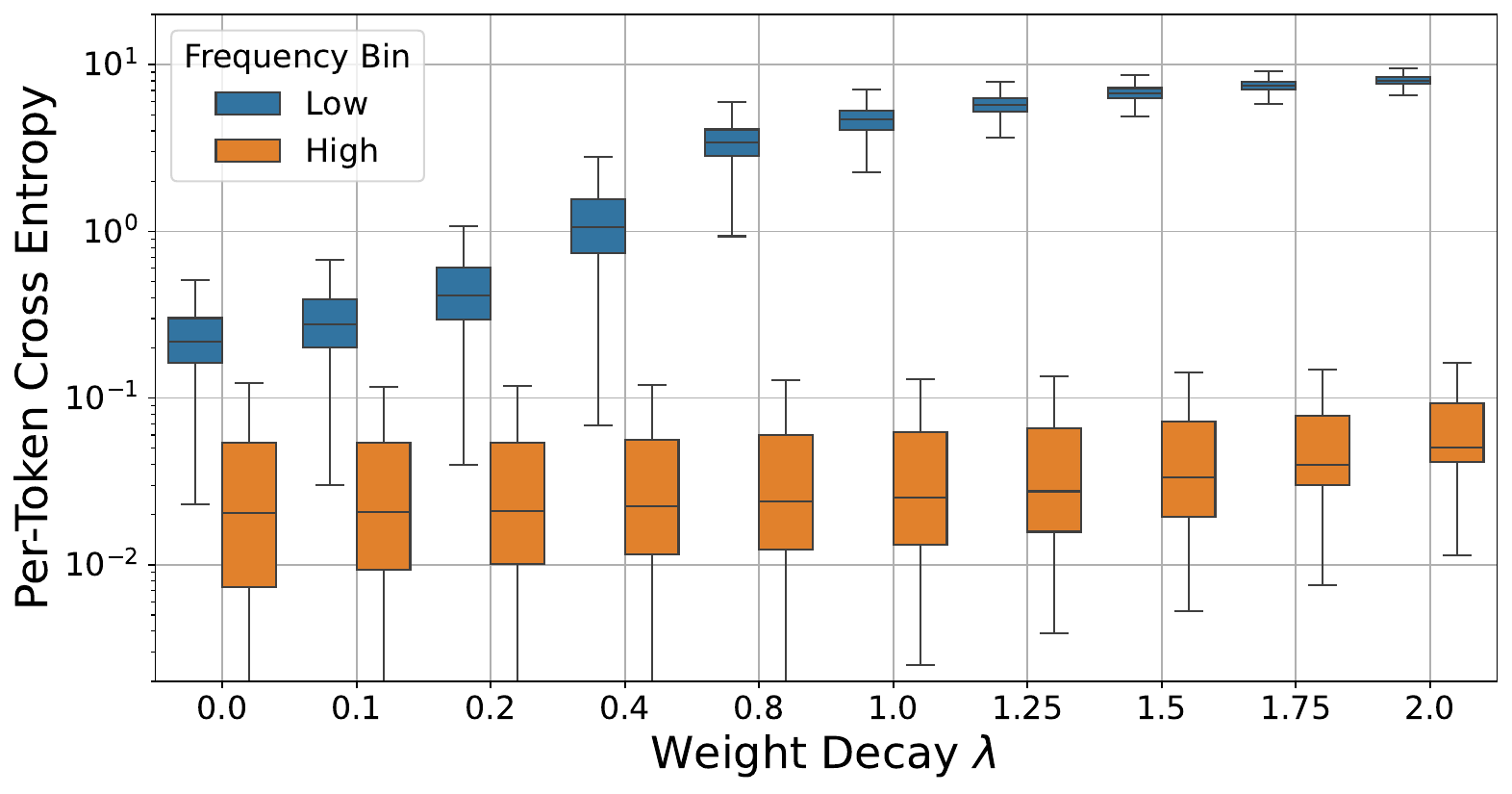} \\
{\small {\bf (a)} Apple \texttt{OpenELM} $270$M on \texttt{IMDB}} & {\small {\bf (b)} Qwen \texttt{Qwen2} $0.5$B on \texttt{IMDB}} \\[6pt]
\includegraphics[width=0.49\linewidth]{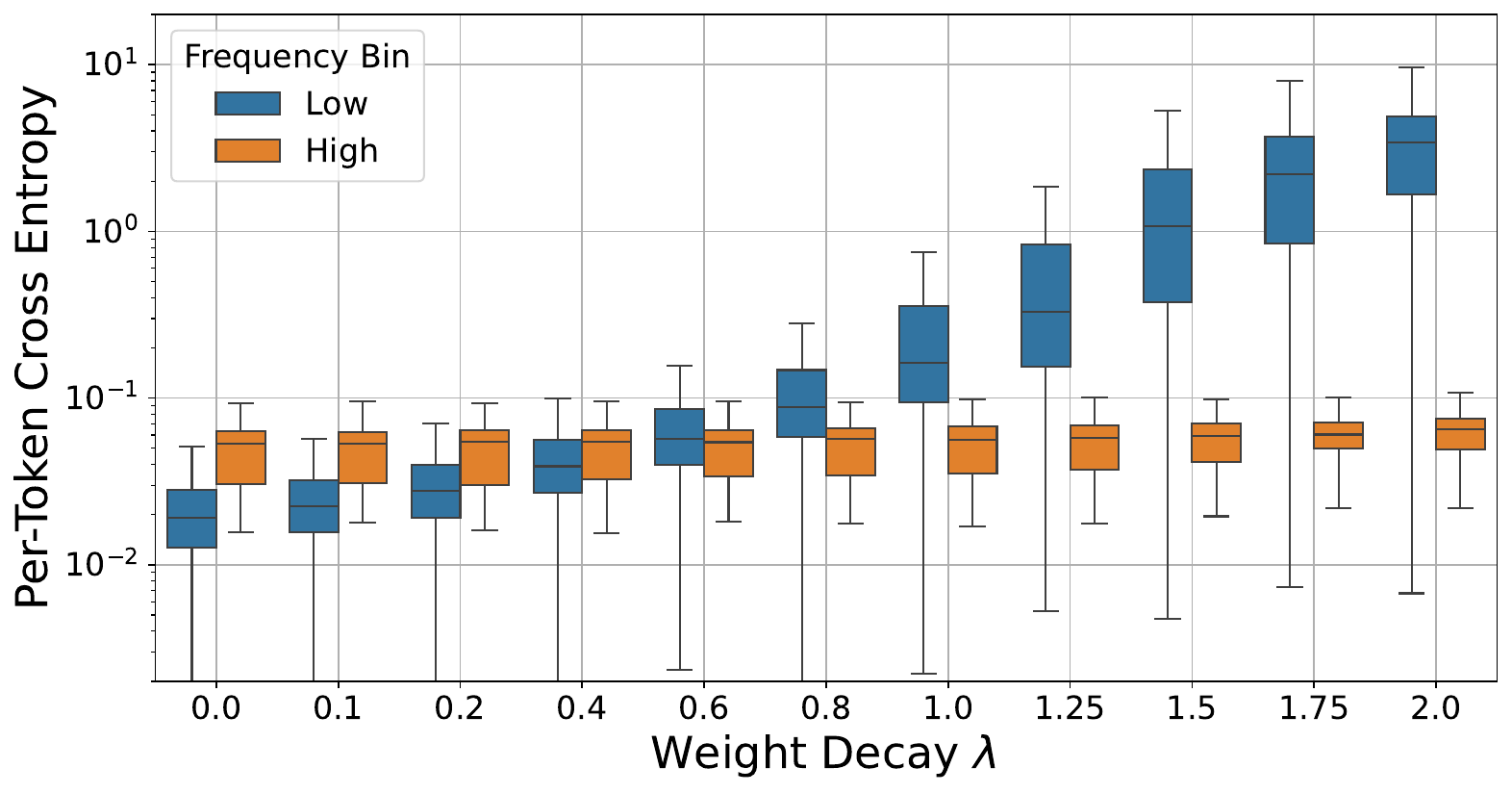}  &
\includegraphics[width=0.49\linewidth]{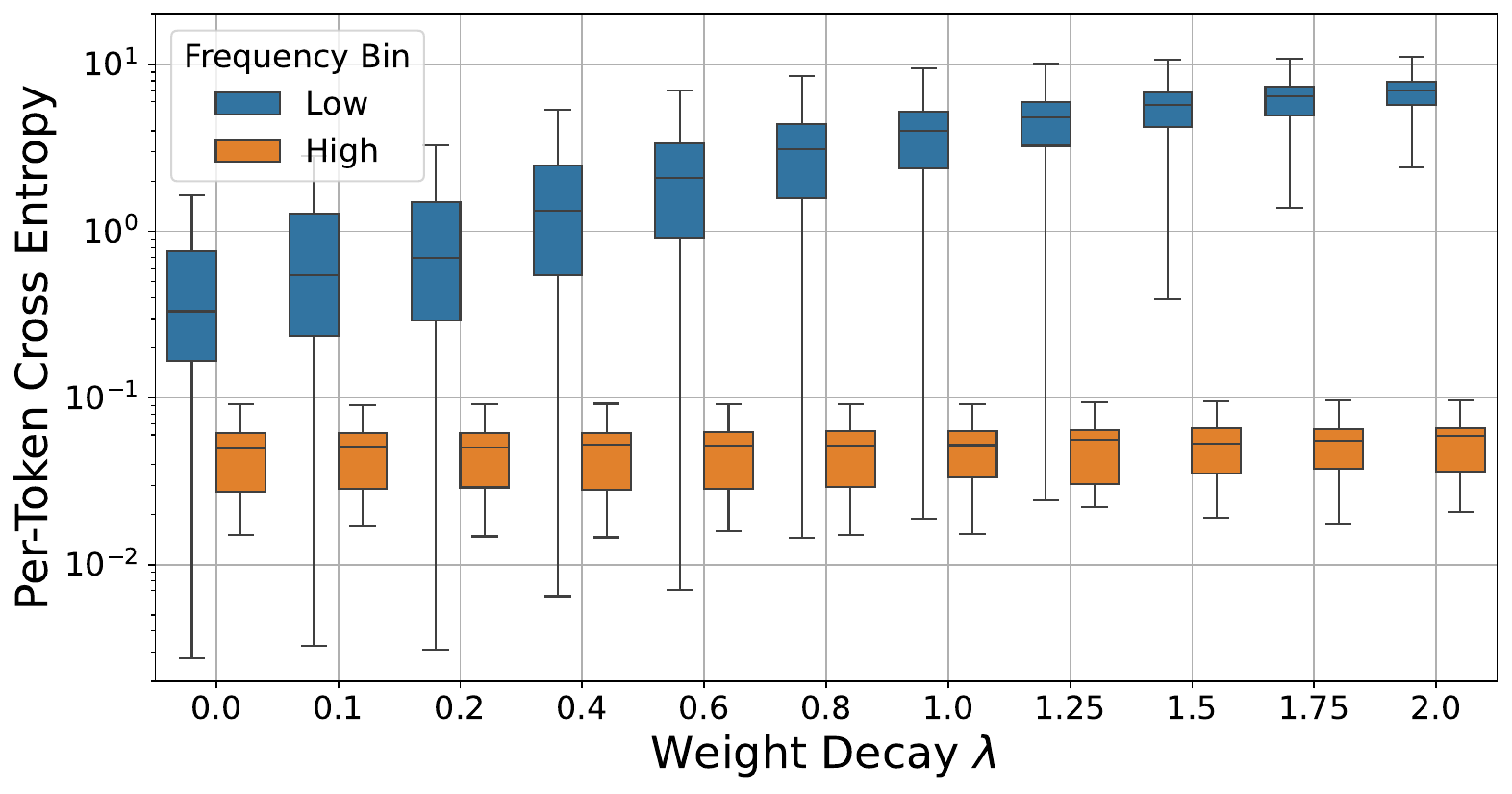} \\
{\small {\bf (c)} Apple \texttt{OpenELM} $3$B on \texttt{IMDB-xl}} & {\small {\bf (d)} Qwen \texttt{Qwen2} $1.5$B on \texttt{IMDB-xl}}
\end{tabular}
\caption{
We compare the per-token cross-entropy loss for low ({\color{customblue} blue}) and high ({\color{customorange} orange}) frequency tokens when training different LLM architectures and sizes with varying weight decay $\lambda \in (0.0, 2.0)$ on the \texttt{IMDB} dataset using a \texttt{BPE} tokenizer with a vocabulary size of $32005$. As weigth decay increases, the model disproportionately disregards low-frequency tokens, which make up the vast majority of tokens in language datasets. Low-frequency tokens suffer from higher cross-entropy loss, while high-frequency tokens remain largely unaffected. Critically, the degradation of low-frequency token performance happens \textit{silently}, as the average training loss, monitored by practitioners, remains largely unchanged across different levels of weight decay. An example of prompt with segmentation of which tokens are low and high frequency is provided in Figure~\ref{fig:colored_prompt}.
}
\label{fig:quadrant:apple-qwen:imdb}
\end{figure}

\begin{figure}[t!]
    \centering
\begin{mdframed}
\textcolor{customorange}{i} \textcolor{customorange}{ watched} \textcolor{customorange}{ this} \textcolor{customorange}{ film} \textcolor{customorange}{ for} \textcolor{customblue}{ 45} \textcolor{customorange}{ minutes} \textcolor{customorange}{ and} \textcolor{customblue}{ counted} \textcolor{customblue}{ 9} \textcolor{customblue}{ mull}\textcolor{customblue}{ets} \textcolor{customorange}{.} \textcolor{customorange}{ that} \textcolor{customorange}{'s} \textcolor{customorange}{ a} \textcolor{customblue}{ mullet} \textcolor{customorange}{ every} \textcolor{customblue}{ 5} \textcolor{customorange}{ minutes} \textcolor{customorange}{.} \textcolor{customblue}{ seriously} \textcolor{customorange}{ though} \textcolor{customorange}{,} \textcolor{customorange}{ this} \textcolor{customorange}{ film} \textcolor{customorange}{ is} \textcolor{customblue}{ living} \textcolor{customblue}{ proof} \textcolor{customorange}{ that} \textcolor{customblue}{ formula} \textcolor{customblue}{ works} \textcolor{customorange}{.} \textcolor{customorange}{ if} \textcolor{customorange}{ it} \textcolor{customblue}{ ain} \textcolor{customorange}{'t} \textcolor{customblue}{ broke} \textcolor{customorange}{,} \textcolor{customorange}{ it} \textcolor{customorange}{ don} \textcolor{customorange}{'t} \textcolor{customblue}{ need} \textcolor{customblue}{ fix} \textcolor{customorange}{in} \textcolor{customorange}{.} \textcolor{customorange}{ a} \textcolor{customblue}{ streetwise} \textcolor{customorange}{-} \textcolor{customblue}{yet} \textcolor{customorange}{-} \textcolor{customblue}{v} \textcolor{customblue}{ulner} \textcolor{customblue}{able} \textcolor{customblue}{ heroine} \textcolor{customorange}{,} \textcolor{customorange}{ a} \textcolor{customblue}{ hardened} \textcolor{customblue}{ ex} \textcolor{customorange}{-} \textcolor{customblue}{cop} \textcolor{customblue}{ martial} \textcolor{customblue}{ arts} \textcolor{customblue}{ master} \textcolor{customorange}{ with} \textcolor{customorange}{ a} \textcolor{customblue}{ heart} \textcolor{customorange}{ of} \textcolor{customblue}{ gold} \textcolor{customorange}{ and} \textcolor{customorange}{ a} \textcolor{customblue}{ serial} \textcolor{customblue}{ killer} \textcolor{customorange}{ with} \textcolor{customorange}{ '} \textcolor{customblue}{iss} \textcolor{customblue}{ues} \textcolor{customblue}{'.} \textcolor{customblue}{ pure} \textcolor{customblue}{ magic} \textcolor{customorange}{.} \textcolor{customorange}{</s>}
\end{mdframed}

\caption{Depiction of a training set prompt from \texttt{IMDB} with characters colored by token frequency: low-frequency ({\color{customblue} blue}) and high-frequency ({\color{customorange} orange}). The coloring threshold is 1026 (P99). Tokens appearing fewer than 1026 times in the dataset are blue, otherwise they are colored in orange.}
\label{fig:colored_prompt}
\end{figure}

\section{Introduction}
A major challenge in machine learning is designing algorithms that generalize well from training data. One of the classic methods for promoting generalization~\citep{NIPS1991_8eefcfdf,10.5555/2621980} is the use of regularization techniques, such as $L_2$ regularization, to limit model complexity. Empirical evidence from classification problems with balanced classes shows that increasing weight decay, while effectively fitting the data and minimizing training loss, generally improves performance on unseen data. However, a recent study by~\citet{10.5555/3600270.3603015} demonstrates that when training classifiers like \texttt{ResNet-50}~\citep{he2016deep} on computer vision tasks such as \texttt{ImageNet} classification~\citep{russakovsky2015imagenetlargescalevisual}, higher weight decay leads to undesired behavior, causing the model to neglect certain classes. This class-dependent effect of regularization is further amplified by imbalanced class distributions, as increasing weight decay does not result in a uniform performance decline across all classes. Instead, the model underperforms on low-probability classes while performing significantly better on more prevalent ones.

This class-dependent behavior is not unique to vision tasks. In Natural Language Processing (NLP), a shift from traditional classification settings has led to a lack of attention toward token frequency and the per-token effects of regularization. In modern large language models (LLMs) trained on text data~\citep{NEURIPS2020_1457c0d6,noauthororeditor,openai2024gpt4technicalreport,ClaudeOrigin,GoogleBARD}, the task of predicting the next token from a large vocabulary also results in significant token frequency imbalances. For instance, as shown in Figures~\ref{fig:imdb:sidebyside}, in the \texttt{IMDB} dataset~\citep{maas-etal-2011-learning}, 95\% of the total tokens in the data are captured by the top 0.01\% of tokens. This indicates that the vast majority of tokens appear infrequently, while a small set of tokens dominates, creating a substantial imbalance. Additionally, the proportion of low-frequency tokens tends to increase as the vocabulary expands. This raises a critical question:

\begin{mdframed} {\em Can regularization techniques, such as weight decay, typically used to promote generalization, ensure fairness across tokens when applied to LLMs trained on imbalanced token distributions?} \end{mdframed}

{\bf Contributions.} In order to study that question, we investigate the influence of weight decay on the token-level prediction performance of large language models, uncovering critical insights that are often overlooked when relying solely on aggregated performance metrics. Our study provides several key contributions:

\begin{itemize}[leftmargin=10pt]
\item The next-token classification task suffers from severe class imbalance. Figures~\ref{fig:imdb:sidebyside} demonstrate that the class distribution follows a heavy-tailed pattern, with the vast majority of classes being low-frequency and only a small portion being high-frequency.
\item We trained the Apple \texttt{OpenELM} models with $270$M and $3$B parameters, as well as \texttt{Qwen2} models with $0.5$B and $1.5$B parameters, on both the \texttt{IMDB} dataset and its extended version \texttt{IMDB-xl}, using varying levels of weight decay that yielded acceptable training losses. As observed in Figure~\ref{fig:quadrant:apple-qwen:imdb}, the models' performance on low-frequency tokens significantly degrades as weight decay increases.
\item We observe that higher-frequency tokens are consistently learned faster than low-frequency tokens across multiple random seeds, with the gap in learning speed widening as weight decay increases, suggesting that regularization may disproportionately disadvantage rare tokens.
\end{itemize}

These findings expose a {\em critical dilemma}. Practitioners often use aggressive weight decay to train LLMs—intended to stabilize training—but unintentionally and silently degrade the model's performance on low-frequency tokens, which make up the majority of the data. While conventional wisdom advocates for increased weight decay to promote generalization, in language modeling, this strategy results in the unintended consequence of “neglecting” low-frequency tokens. This introduces harmful biases, favoring more common tokens, often reflecting the language patterns of majority groups. Our results emphasize the need for regularization techniques that explicitly address token imbalances.

\section{Related Work}

{\bf Weight Decay and Generalization. \indent} $L_2$ regularization was initially introduced to stabilize solutions to ill-posed problems~\citep{Tikhonov1943OnTS} and later adopted to enhance the generalization of neural networks~\citep{NIPS1991_8eefcfdf}. While many studies have linked low-norm solutions to improved generalization~\citep{pmlr-v40-Neyshabur15,2017arXiv171206541G,Bartlett2001RademacherAG, 10.5555/3295222.3295372,pmlr-v80-arora18b,NEURIPS2023_8493e190,https://doi.org/10.48550/arxiv.1905.03684,https://doi.org/10.48550/arxiv.1806.05159}, the precise relationship between $L_2$ regularization and generalization remains a topic of debate. Several works~\citep{zhang2017understanding,Jiang2020Fantastic} argue that norm-based measures alone are insufficient to fully explain generalization in deep learning. For example, \cite{zhang2017understanding} found that although weight decay can improve test accuracy, the overall effect is typically modest—around $1-2$\% on \texttt{ImageNet}. Nonetheless, other studies have demonstrated that weight decay helps alleviate~\citep{nakkiran2021optimal,pmlr-v162-pezeshki22a} the double descent phenomenon~\citep{Belkin_2019,Nakkiran2020Deep} and is critical for achieving Grokking in mathematical reasoning tasks~\citep{power2022grokkinggeneralizationoverfittingsmall,varma2023explaininggrokkingcircuitefficiency}.

{\bf Weight Decay, Optimization, and Inductive Biases. \indent} Despite its modest impact on generalization, weight decay is widely employed in many state-of-the-art language models, including \texttt{GPT-3}~\citep{NEURIPS2020_1457c0d6}, \texttt{Chinchilla}~\citep{10.5555/3600270.3602446}, and \texttt{LLaMA}~\citep{touvron2023llamaopenefficientfoundation,touvron2023llama2openfoundation,dubey2024llama3herdmodels}. These models are typically trained using a ``one-pass'' stochastic gradient descent (\texttt{SGD}) regime, where the optimizer directly minimizes the population error. 

As shown in~\citep{andriushchenko2023needweightdecaymodern}, the training and validation losses remain closely aligned across different levels of weight decay. While weight decay's effect on generalization is limited, it plays a crucial role in improving optimization. For example, both the \texttt{Chinchilla} paper~\citep{10.5555/3600270.3602446} (see Figure A7) and~\citep{andriushchenko2023needweightdecaymodern} (see Figure 4) demonstrate that weight decay in \texttt{AdamW} leads to lower training loss compared to \texttt{Adam}, particularly toward the end of training. Other studies~\citep{vanlaarhoven2017l2regularizationversusbatch,zhang2018three,li2019exponentiallearningrateschedule,10.5555/3495724.3496943,10.5555/3495724.3496126} have shown that weight decay enhances training stability by controlling the ``effective learning rate'' in scale-invariant neural networks. Additionally, other works~\citep{galanti2023characterizingimplicitbiasregularized,9746778,pan2024understandingneuralcollapseeffects, beneventano2024neuralnetworkslearnsupport} reveal that weight decay contributes to various inductive biases, such as rank minimization and neural collapse, which are beneficial for network compression~\citep{NIPS2014_2afe4567,10.5555/3294771.3294853,s21165599,8099498} and downstream performance~\citep{galanti2022on,galanti2023generalizationboundsfewshottransfer}.

{\bf Training with Imbalanced Classes and Minority Collapse.\enspace} Training with imbalanced classes presents a significant challenge in machine learning. Empirical studies consistently show that the weight vectors associated with the more frequent classes tend to have larger norms, which pushes the decision boundary toward the minority classes. As a result, the feature space allocated to less frequent classes shrinks, leading to a notable drop in performance~\citep{kim20,Kang19,Cao19,ye20,liu23,kang20,balestriero2022effects}. For instance,~\citep{balestriero2022effects} showed that when training neural networks for classification of visual data, higher levels of weight decay introduce a stronger bias for the model to prioritize higher-probability classes over lower-probability ones.

To gain deeper insights into this issue, several works have investigated this phenomenon from a theoretical perspective. \cite{doi:10.1073/pnas.2103091118} proposed the unconstrained features model (UFM) as a simplified framework for exploring the geometric properties of the global minima in cross-entropy loss with regularization, particularly in overparameterized neural networks. In the case of a balanced dataset, they demonstrated that neural collapse (NC) occurs at any global minimizer of the loss function combined with regularization. However, in the case of class imbalance, neural networks exhibit distinct geometric patterns, and some of the NC properties no longer hold~\citep{dang23,Christos22,hong23,pmlr-v235-dang24a}. While last-layer features for samples in the same class still collapse to their respective class means (NC1), the class means and classifier weights no longer form a Simplex Equiangular Tight Frame (ETF), violating NC2~\citep{doi:10.1073/pnas.2103091118}. In more extreme cases, when the imbalance becomes severe, the classifier weights for minority classes can collapse onto each other, rendering them indistinguishable from other classes~\citep{doi:10.1073/pnas.2103091118}. This phenomenon, referred to as ``Minority Collapse,'' explains the sharp decline in accuracy for minority classes in imbalanced settings.

Building on these finding, we explore whether similar behavior arises in next-token prediction tasks, where the problem can be viewed as a (very noisy) classification task, with the next token acting as the ``class'' for a given sequence.


\section{Experimental Settings}

{\bf Problem Setup. \indent} We focus on the task of next-token prediction in autoregressive language modeling, a self-supervised learning problem central to natural language processing. Given a sequence of tokens $x_i=(x_{i,1}, ..., x_{i,n})$, the objective is to model the conditional probability distribution $p(x_{i,t} | x_{i,<t})$ for each token position $t$. Formally, let $\mathcal{D} = \{ x_i \}_{i=1}^{m}$ be a text corpus, where each $x_i = (x_{i,1}, ..., x_{i,n}) \in \mathcal{S} = \mathcal{V}^n$ is a sequence of $n$ tokens, and $x_{i,t} \in \mathcal{V}$ belongs to a fixed vocabulary $\mathcal{V}$ of size $V$. We train different transformer-based models $f_\theta : \mathcal{S} \rightarrow [V]$ mapping from sequence to logits to minimize the regularized empirical risk:
\begin{equation*}
L_\mathcal{D}^\lambda(f) = \frac{1}{m} \sum_{i=1}^{m} \left(\frac{1}{n-1} \sum_{t=1}^{n-1} \ell(f_{\theta}(x_{i,\leq t}), x_{i,t+1})\right) + \lambda ||\theta||_{2}^{2},
\end{equation*}
where $\ell$ denotes the cross-entropy loss, $\lambda$ is the weight decay coefficient, $||\theta||_2$ is the $L_2$ norm of the model parameters, $x_{i,\leq t}$ the sequence up to position $t$ in the $i$-th sample, and $x_{i,t+1}$ is the next token to be predicted.

{\bf Model Architecture. \indent}
For our experiments, we train multiple models, including the Apple \texttt{OpenELM} models with $270$M and $3$B parameters~\citep{openelm}, as well as the Qwen \texttt{Qwen2} models with $0.5$B and $1.5$B parameters~\citep{yang2024qwen2technicalreport}. The small models ($<1$B parameters) were configured with a context length of $128$ whereas the large ones with a context length of $64$. These architectures's moderate size allows us to conduct multiple training runs with different $\lambda$ values, enabling a comprehensive exploration of weight decay’s impact on token-level dynamics across various regularization choices.

{\bf Dataset and Tokenizer. \indent}
We trained our models on the \texttt{IMDB} dataset~\citep{maas-etal-2011-learning}, a widely used benchmark due to its balanced sentiment-labeled data. The \texttt{IMDB} training split contains $25000$ samples. For the scope of this study, we discarded the labels, focusing solely on the raw text data to analyze the impact of hyperparameters on token-level generation performance. Additionally, we created an extended version of the dataset, termed \texttt{IMDB-xl}, by incorporating all the unsupervised samples from \texttt{IMDB}, which increased the training set to a total of $75000$ samples. We used Byte Pair Encoding (\texttt{BPE})~\citep{10.5555/177910.177914,sennrich-etal-2016-neural} as our tokenization method, training a tokenizer on the \texttt{IMDB} dataset’s training set with a target vocabulary size of $32005$ tokens. \texttt{BPE} ensures that both frequent and infrequent tokens are well represented, providing a suitable basis for analyzing token-level learning. This choice allows us to examine how different tokens, particularly rare ones, are influenced by various weight decay values throughout the training process.

{\bf Hyperparameters and Training. \indent}
We conducted experiments across a range of weight decay values $\lambda \in (0.0, 2.0)$. To account for the stochastic nature of training, each configuration was run with the same 5 different random seeds, with results averaged and presented with confidence intervals. The models were trained using the \texttt{AdamW} optimizer~\citep{loshchilov2019decoupledweightdecayregularization} to decouple the weight decay from gradient updates, with a learning rate of $5$e$-5$ and a cosine decay schedule. A warm-up period of $10$ training steps was used for stabilization during early training. We trained each LLM for a total of $10000$ steps, evaluating token-level metrics every $100$ steps to understand how different tokens are learned and represented under varying weight decay values. For small models (less than $1$B parameters), we used a batch size of $64$ per device on a single NVIDIA \texttt{A100} $32$GB GPU, whereas for large models, we used a batch size of $16$ with a gradient accumulation step of $4$ on a single NVIDIA \texttt{A100} $80$GB GPU. Additionally, mixed precision training (\texttt{fp16}) was employed to optimize GPU memory usage, reduce computational load, and accelerate convergence.

{\bf Evaluation Metrics. \indent} Our evaluation used token-level metrics to assess how individual tokens were generated under varying weight decay configurations. Unlike traditional metrics like perplexity, we focused on whether specific tokens were under-represented during inference, revealing how certain configurations induce bias in token representation, even when aggregate performance appears stable. The {\em average training loss}, computed with cross-entropy loss over all tokens in a batch, is given by:
\[
\mathcal{L}_{\text{avg}} = - \frac{1}{BC} \sum_{b=1}^{B} \sum_{c=1}^{C} \sum_{v=1}^{V} y_{b,c,v} \log p_{b,c,v},
\]
where $y_{b,c,v}$ and $p_{b,c,v}$ are the ground truth and predicted probabilities, respectively. This method implicitly favors frequent tokens, as they appear more often, leading the model to prioritize them over low-frequency tokens. To counter this imbalance, we also used a {\em token-balanced training loss}, ensuring each token contributes equally, regardless of frequency. First, we compute the cross-entropy loss for each token $\ell_{b,c} = - \sum_{v=1}^{V} y_{b,c,v} \log p_{b,c,v}$. We then average these losses by token type, yielding the final token-balanced loss:
\[
\mathcal{L}_{\text{tok-bal}} = \frac{1}{V} \sum_{v=1}^{V} \frac{1}{|\{(b,c) : y_{b,c,v} = 1\}|} \sum_{\{(b,c) : y_{b,c,v} = 1\}} \ell_{b,c}.
\]
This approach ensures that low-frequency tokens contribute equally to the loss, leading to a more balanced optimization process, though it may challenge the model's handling of rare tokens, especially under strong regularization. The computation of per-token metrics is summarized in Algorithm~\ref{alg:token:metrics}.

{\bf Per-Token Learning Speed. \indent} To quantify how quickly the model learns individual tokens during training, we introduce the per-token learning speed metric, which measures how rapidly the model minimizes the cross-entropy loss for each token. Specifically, we compute the area under the curve (AUC) of the token’s normalized loss trajectory over time, where a smaller AUC indicates faster learning. For each token, we normalize its cross-entropy losses across all training steps to the range \([0, 1]\), ensuring comparability between tokens with different loss scales. Let \(\ell_t\) represent the cross-entropy loss for a token at training step \(t\), and let \(\mathcal{L} = \{\ell_1, \ell_2, \dots, \ell_T\}\) be the set of losses over \(T\) steps. We define the normalized loss at each step as: \(\tilde{\ell_t} = \frac{\ell_t - \min(\mathcal{L})}{\max(\mathcal{L}) - \min(\mathcal{L})}\). The area under the normalized loss curve is calculated as: \(\text{AUC}(\mathcal{L}) = \int_0^T \tilde{\ell_t} dt\). The learning speed \(S\) is then defined as: \(S = 1 - \frac{\text{AUC}(\mathcal{L})}{T}\). This ensures that \(S\) ranges from 0 to 1, with higher values indicating faster learning. If the range of losses is zero (i.e., \(\max(\mathcal{L}) = \min(\mathcal{L})\)), we set \(S\) to zero, as no learning occurs.

\section{Empirical Results}\label{sec:empirical_results}

\subsection{Textual Data is Highly Imbalanced}

As a preliminary step in our study, we investigated how imbalanced textual data actually is. For this purpose we conducted several statistical evaluations of the \texttt{IMDB} dataset in order to verify that indeed the vast majority of tokens are of low-frequency and a small minority of the tokens appear very often in the data as predicted by the Zipf law. 

{\bf Token Frequency Percentiles.\indent} Figure~\ref{fig:imdb:sidebyside}(a) highlights the extreme token frequency imbalance in the \texttt{IMDB} dataset, tokenized using a \texttt{BPE} tokenizer with a vocabulary size of $32005$. The vast majority of the token frequency mass is concentrated in a very small portion of high-frequency tokens. Specifically, 95\% of the total tokens in the data is captured by the top 0.01\% of tokens, which demonstrates the steep distribution of token frequencies, where very few tokens dominate. To illustrate the calculation: suppose we have a dataset with 100 samples, each consisting of 10 tokens. If a set of 1\% of the tokens in the vocabulary appear 800 times across those 1000 tokens, we say that 80\% of the total tokens in the data is captured by the top 1\% of tokens. Figure~\ref{fig:imdb:sidebyside}(b) further emphasizes this imbalance, showing how the proportion of low-frequency tokens (those below the $95$th percentile in the data) grows as the vocabulary size increases. This suggests that token imbalance is inherent to language and further amplified as the vocabulary size expands with larger tokenizers.

\begin{figure}[t]
\centering
\begin{tabular}{c@{\hskip 0.1cm}c}
\includegraphics[width=0.49\linewidth]{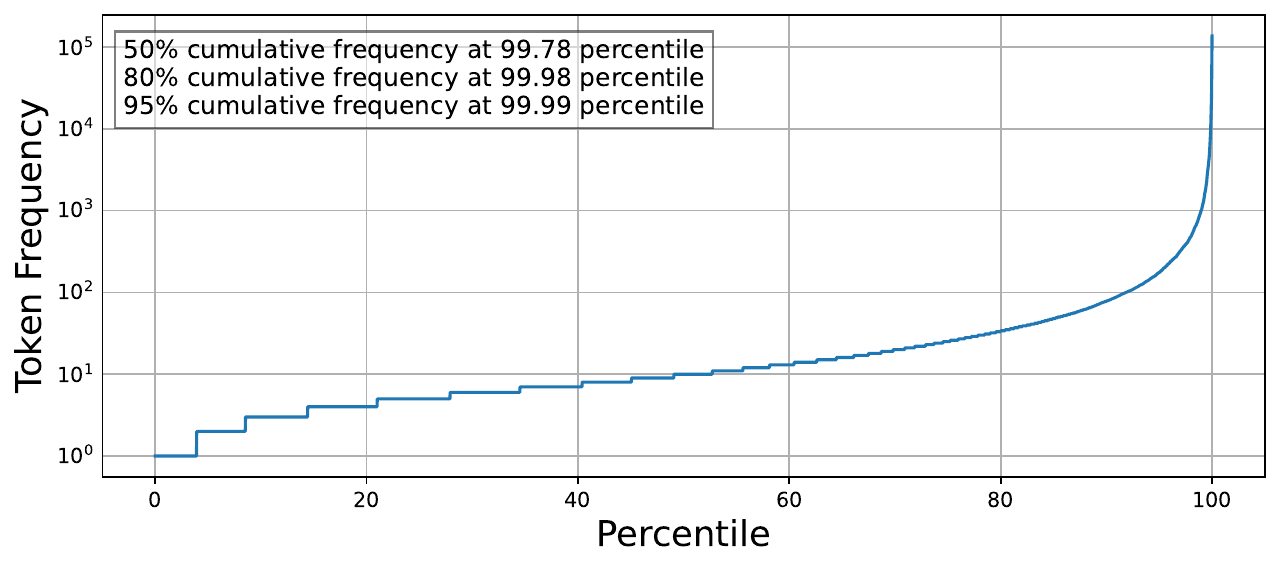} & 
\includegraphics[width=0.49\linewidth]{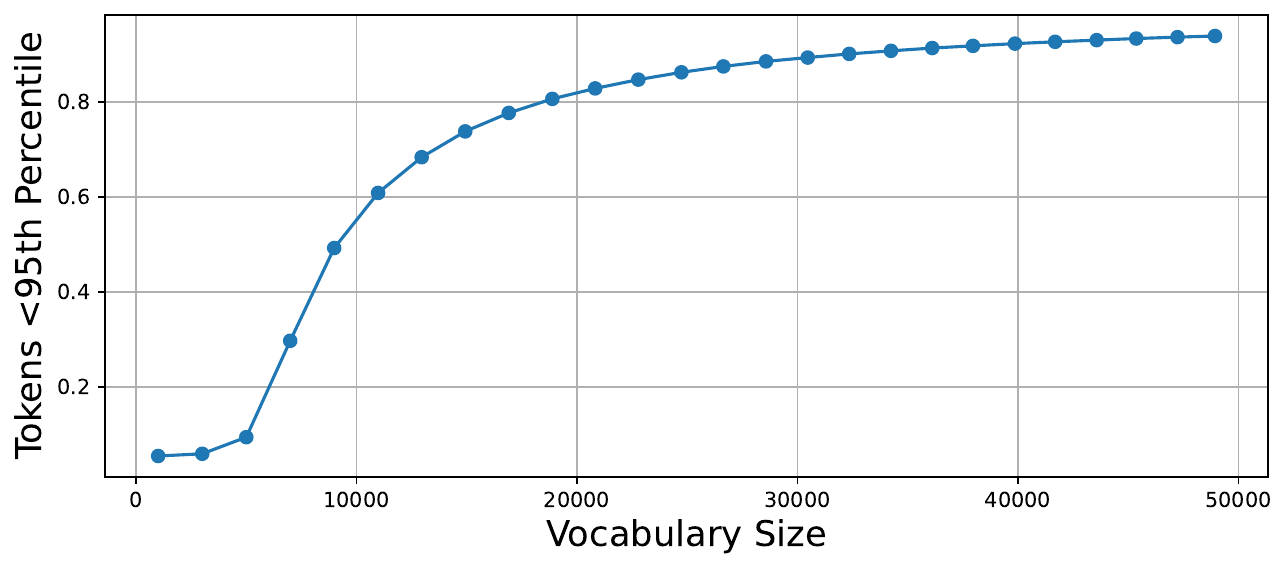} \\
{\small {\bf (a)} Token Frequency Across Percentiles} & {\small {\bf (b)} Tokens Below 95th Percentile} \\
\end{tabular}
\caption{
Comparison of token frequency distribution and the ratio of low-frequency tokens across varying vocabulary sizes for the IMDB dataset. The left plot shows the token frequency distribution with cumulative frequency thresholds (50\%, 80\%, and 95\%) marked. The right plot illustrates how the ratio of tokens below the 95th percentile increases with vocabulary size, converging to $\approx 0.85$.
}
\label{fig:imdb:sidebyside}
\end{figure}

\begin{figure}[t]
    \centering
    \begin{minipage}[t]{0.49\textwidth}
        \centering
        \includegraphics[width=\linewidth]{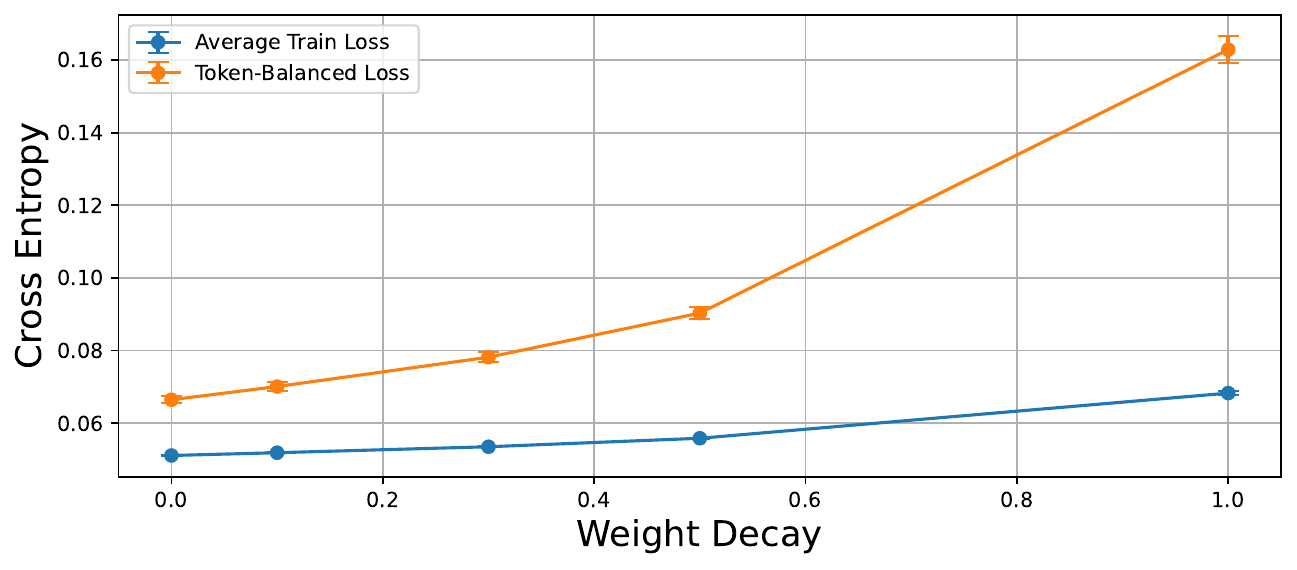}
        \caption{
        \textbf{Impact of Weight Decay on Cross-Entropy.} Average training loss (blue) and class-balanced loss (orange) increase with weight decay. The class-balanced loss is more sensitive due to its focus on low-frequency tokens.
        }
        \label{fig:loss:class:rebalance}
    \end{minipage}\hfill
    \begin{minipage}[t]{0.49\textwidth}
        \centering
        \includegraphics[width=\linewidth]{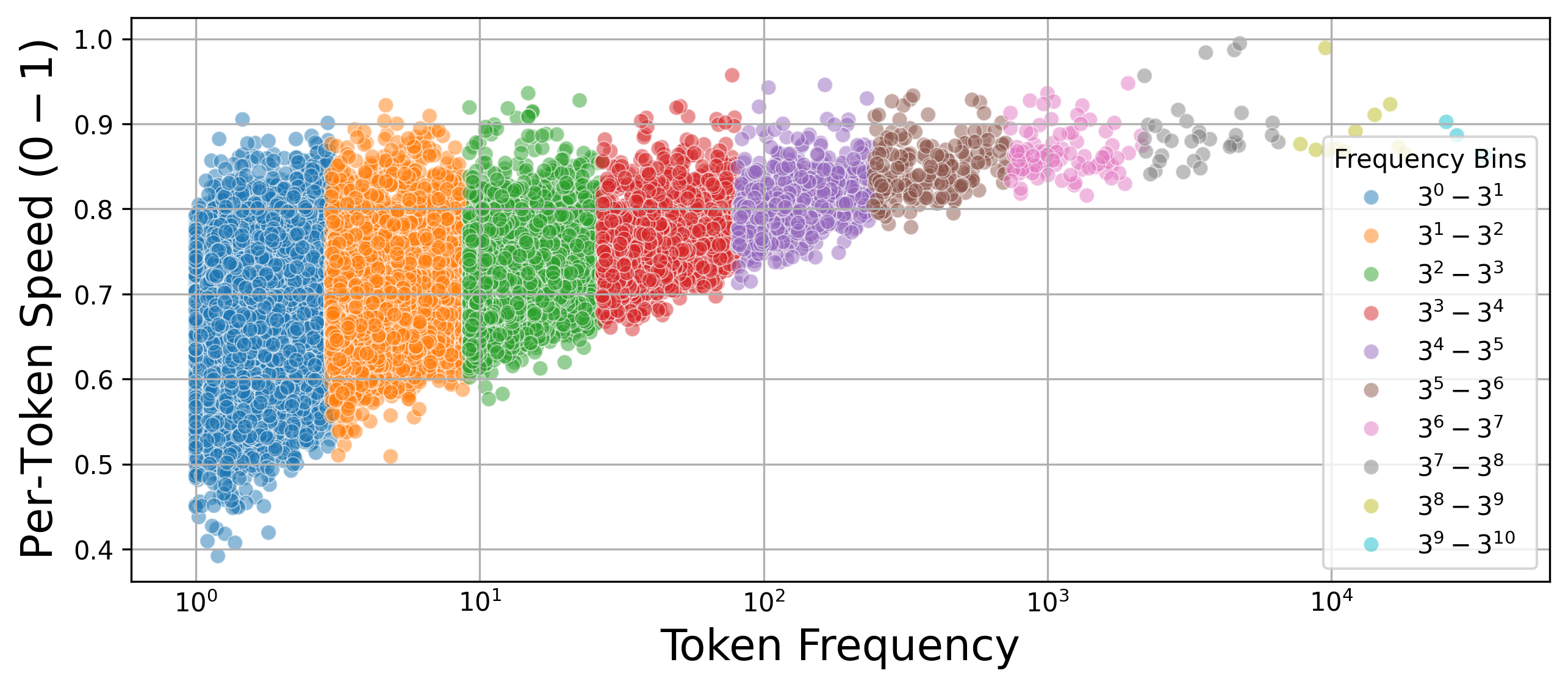}
        \caption{
        \textbf{Token Learning Speed.} Token learning speed ($0-1$) plotted against frequency (log-scale) for $\lambda = 1.0$. Colors represent token groups by frequency bins, highlighting variation across token frequencies.
        }
        \label{fig:speed:scatter}
    \end{minipage}
\end{figure}

\begin{table}[t]
\centering
\renewcommand{\arraystretch}{1.3} 
\resizebox{\columnwidth}{!}{ 
\begin{tabular}{l|ccccc}
\hline
\textbf{Weight Decay $\lambda$} & \textbf{0.0} & \textbf{0.1} & \textbf{0.3} & \textbf{0.5} & \textbf{1.0} \\
\hline
Training Loss & 0.051$\pm$0.000 & 0.052$\pm$0.000 & 0.054$\pm$0.000 & 0.056$\pm$0.000 & 0.068$\pm$0.001 \\
\hline
Per-Token Loss & 0.066$\pm$0.001 & 0.070$\pm$0.001 & 0.078$\pm$0.001 & 0.090$\pm$0.002 & 0.163$\pm$0.004 \\
Per-Token PPL & 1.069$\pm$0.003 & 1.073$\pm$0.003 & 1.081$\pm$0.003 & 1.095$\pm$0.003 & 1.177$\pm$0.003 \\
Per-Token Accuracy (\%) & 98.798$\pm$0.031 & 98.781$\pm$0.029 & 98.778$\pm$0.032 & 98.759$\pm$0.028 & 98.714$\pm$0.029 \\
\hline
\end{tabular}
}
\vspace{0.1cm}
\caption{Impact of weight decay ($\lambda$) on model performance metrics (mean$\pm$std).}
\label{tab:metrics}
\end{table}

\begin{figure}[t]
\centering
\begin{tabular}{c@{\hskip 0.1cm}c}
    \includegraphics[width=0.49\linewidth]{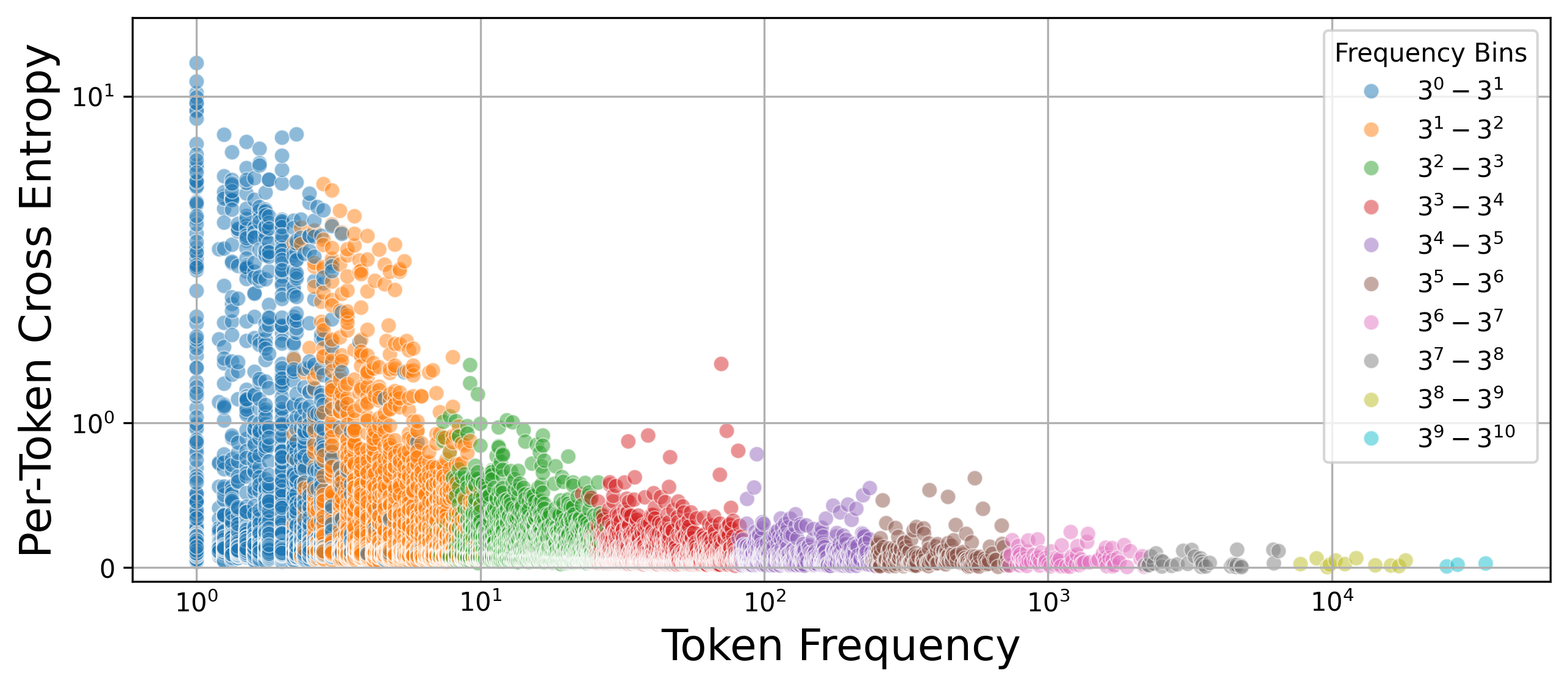} &
    \includegraphics[width=0.49\linewidth]{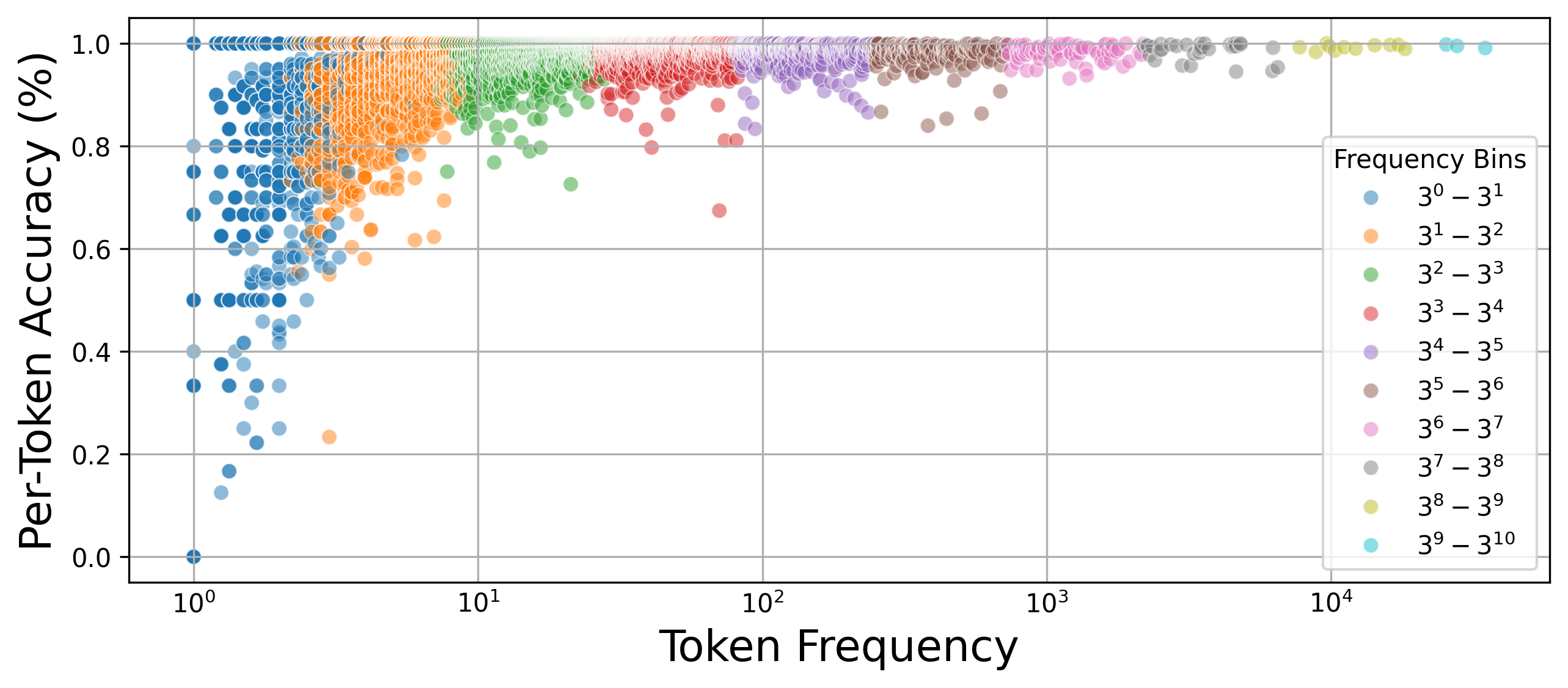} \\
    {\small \textbf{(a)} Per-Token Cross Entropy} & {\small \textbf{(b)} Per-Token Accuracy} \\
\end{tabular}
\caption{
    \textbf{(a)} The relationship between token frequency (log-scale) and per-token cross entropy (symlog-scale) across frequency bins, representing powers of $3$. Lower-frequency tokens exhibit significantly higher cross entropy, indicating weaker learning. \textbf{(b)} The relationship between token frequency and accuracy across frequency bins. Higher-frequency tokens achieve higher accuracy, while lower-frequency tokens demonstrate more variability in performance. Both plots are $\lambda = 1.0$.
}
\label{fig:token:loss:acc:comparison}
\end{figure} 

\begin{figure}[t]
\centering
\begin{tabular}{c@{\hskip 0.1cm}c@{\hskip 0.1cm}c}
\rotatebox{90}{\parbox{2cm}{\centering \bf \scriptsize \texttt{OpenELM} $270$M \texttt{IMDB}}} & 
\includegraphics[width=0.47\linewidth]{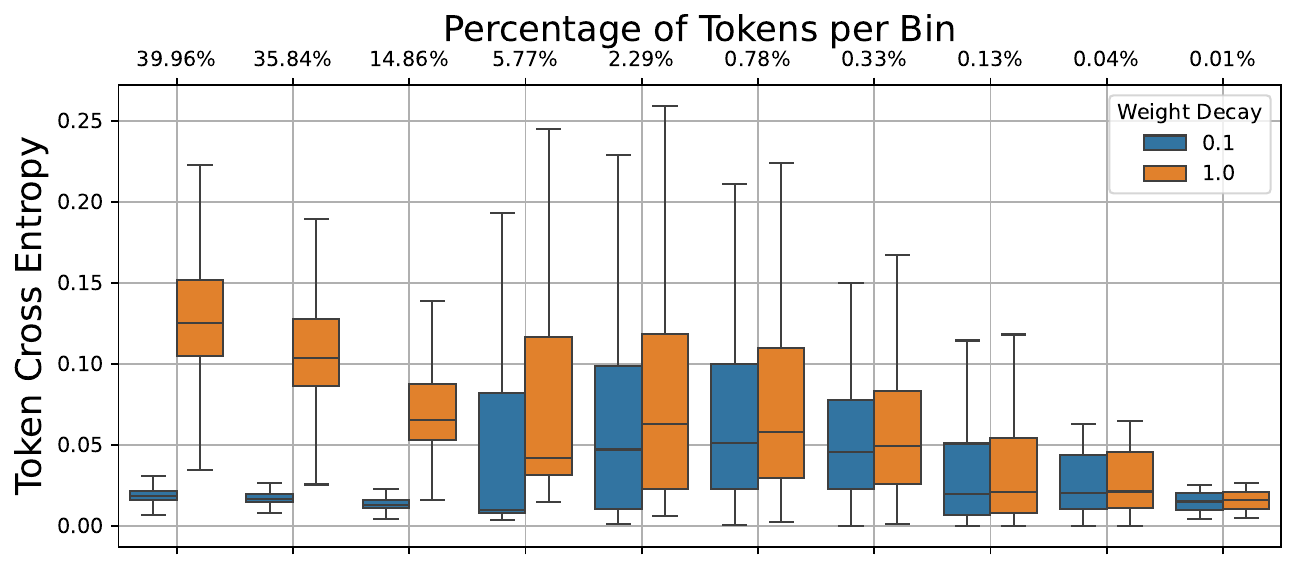} & 
\includegraphics[width=0.47\linewidth]{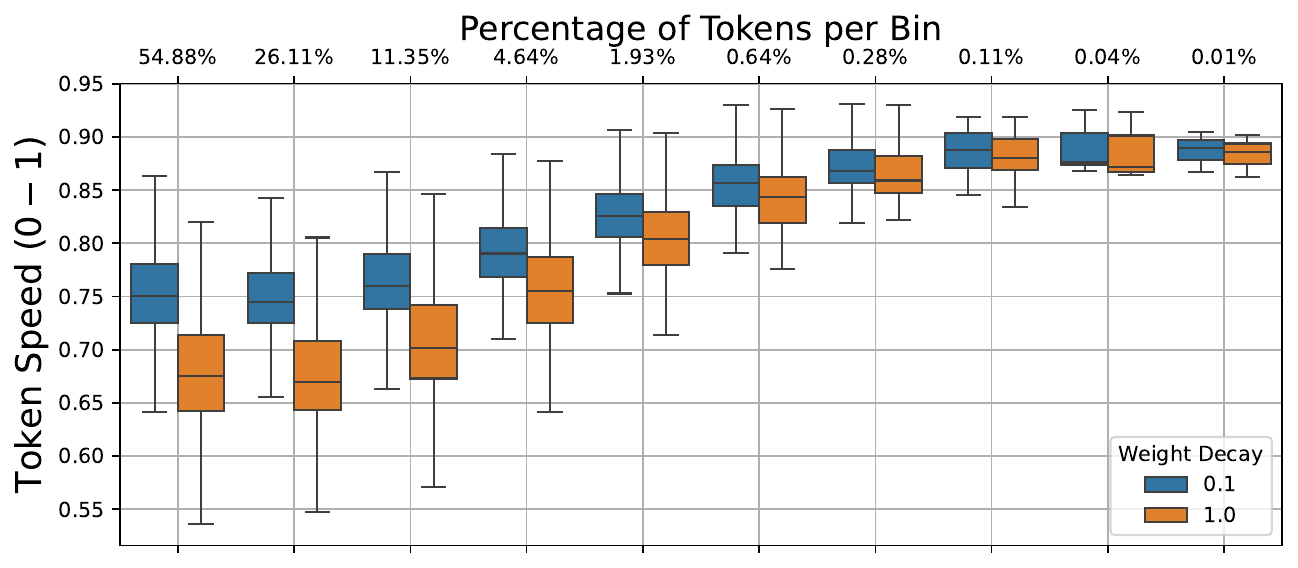} \\
\rotatebox{90}{\parbox{2cm}{\centering \bf \scriptsize \texttt{OpenELM} $3$B \texttt{IMDB-xl}}} & 
\includegraphics[width=0.46\linewidth]{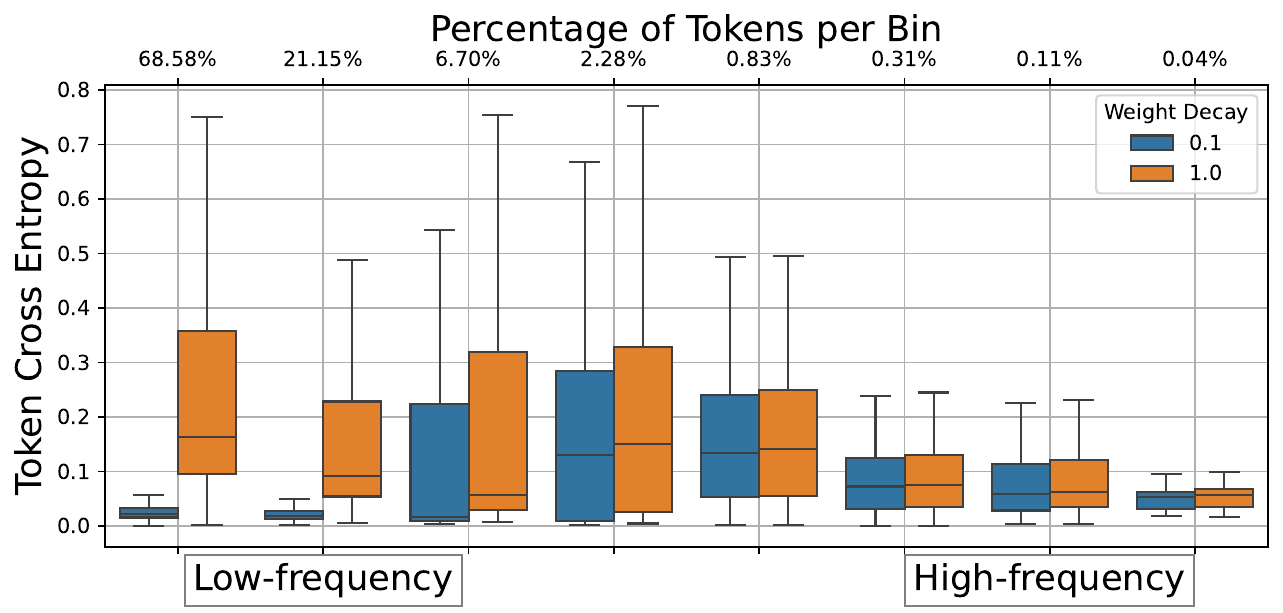} & 
\includegraphics[width=0.46\linewidth]{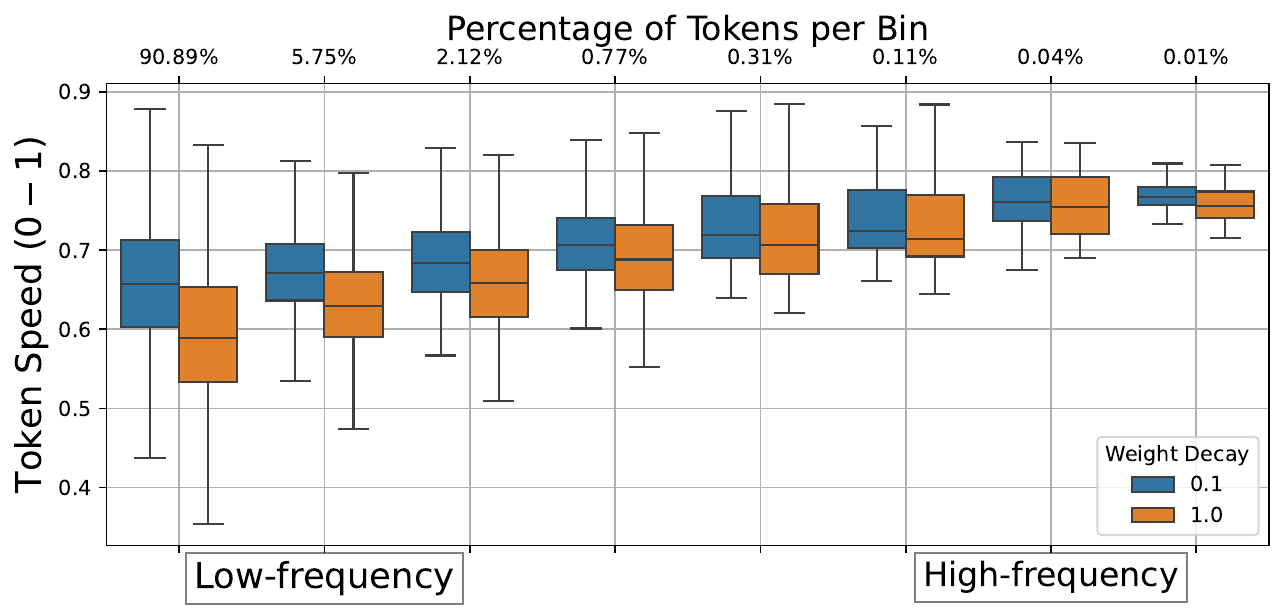} \\
\rotatebox{90}{\parbox{2cm}{\centering \bf \scriptsize \texttt{Qwen2} $0.5$B \\ \texttt{IMDB}}} & 
\includegraphics[width=0.47\linewidth]{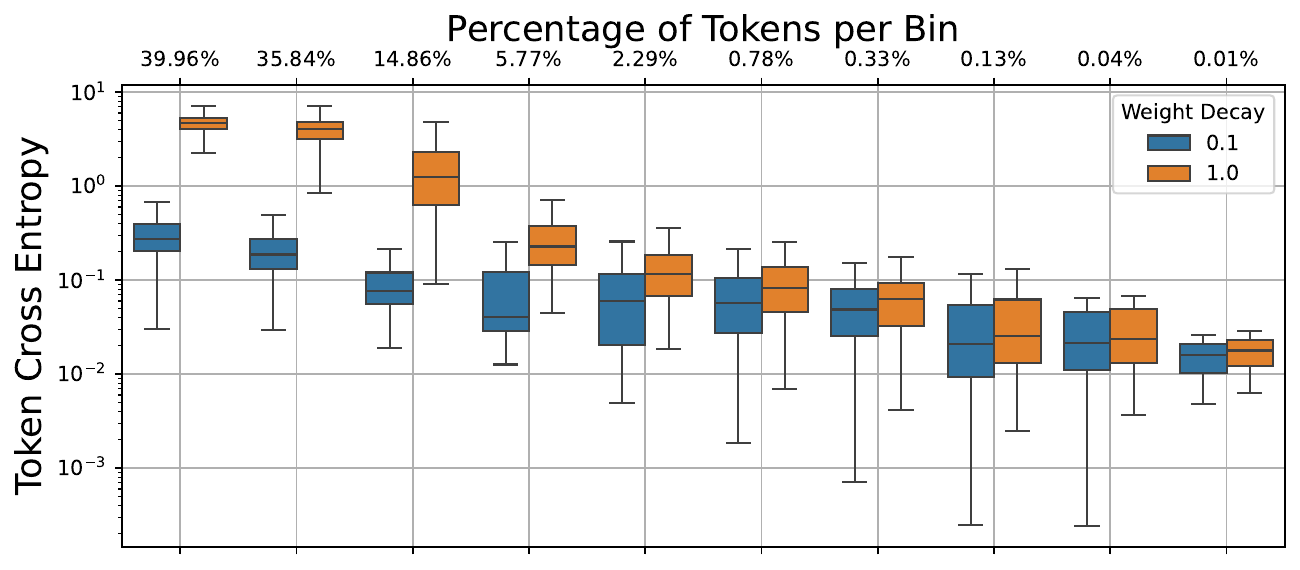} & 
\includegraphics[width=0.47\linewidth]{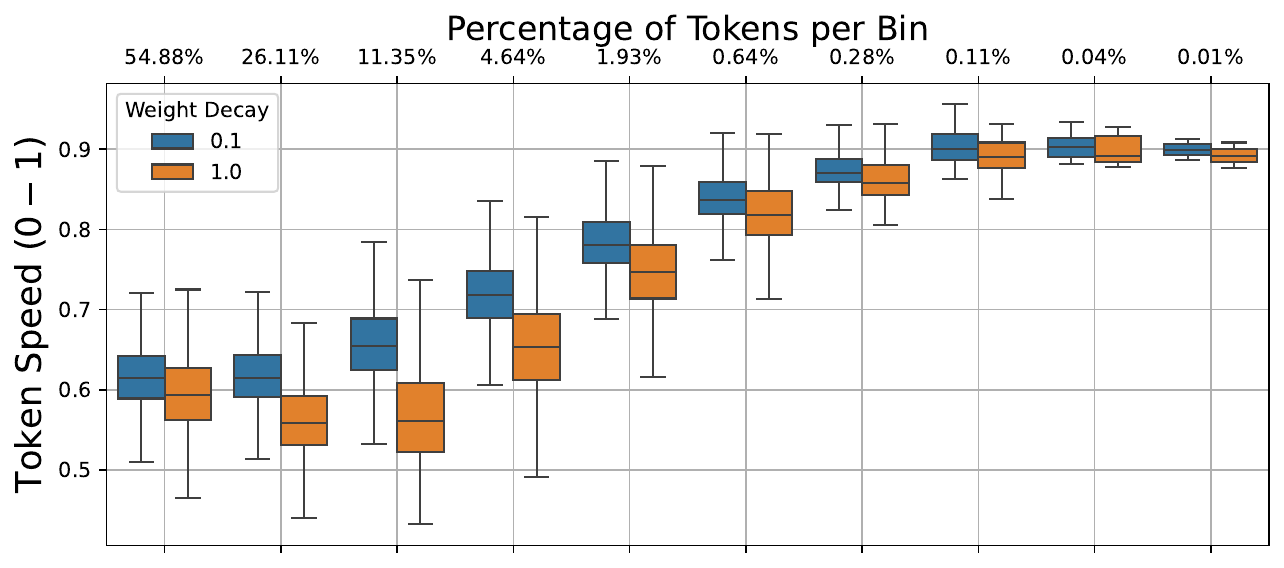} \\
\rotatebox{90}{\parbox{2cm}{\centering \bf \scriptsize \texttt{Qwen2} $1.5$B \texttt{IMDB-xl}}} & 
\includegraphics[width=0.47\linewidth]{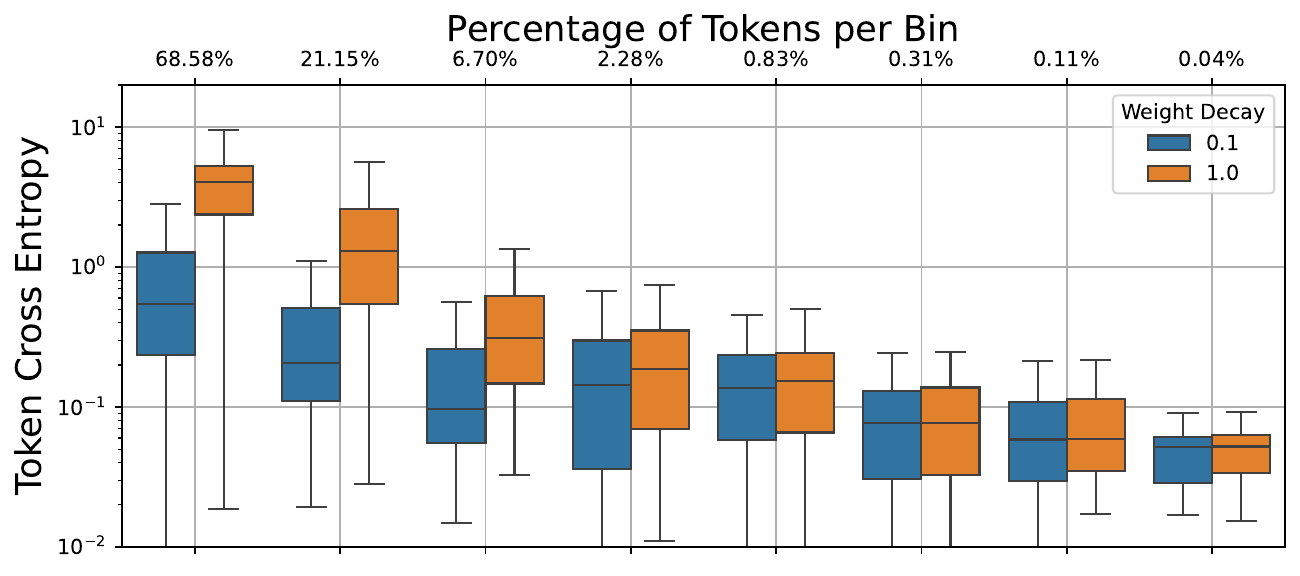} & 
\includegraphics[width=0.47\linewidth]{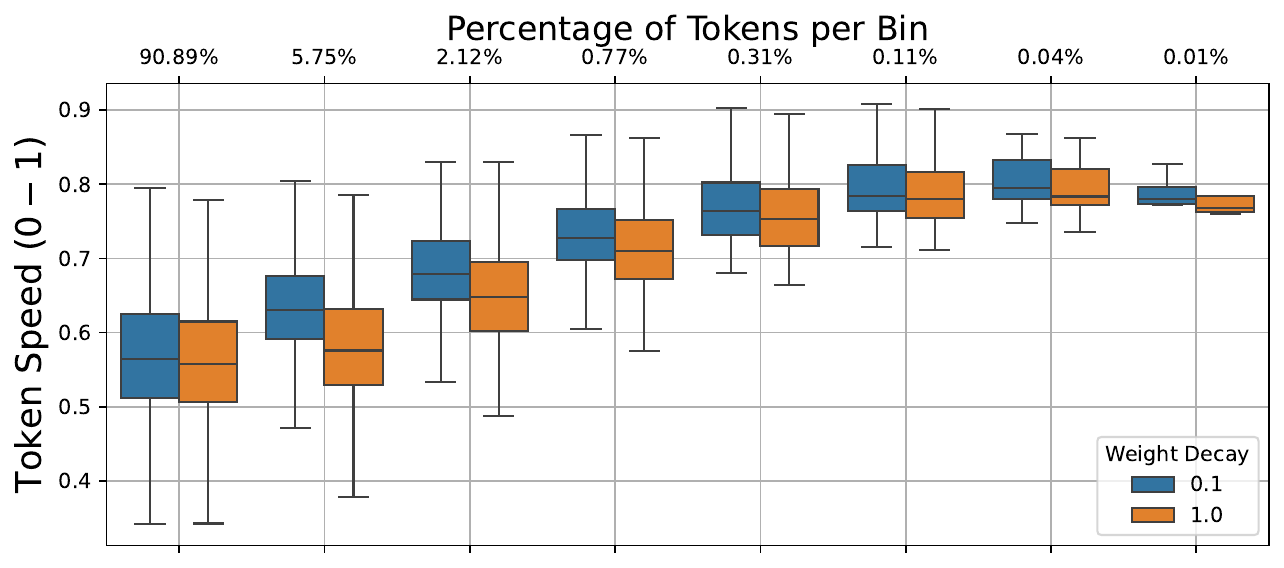} \\
& {\small {\bf (a)} Token Cross Entropy} & {\small {\bf (b)} Token Learning Speed}
\end{tabular}
\caption{
We compare the {\bf (a)} per-token cross-entropy loss and {\bf(b)} learning speed across token frequencies when training an LLM with weight decay $\lambda = 0.1$ and $\lambda = 1.0$, on the \texttt{IMDB} and \texttt{IMDB-xl} datasets using a \texttt{BPE} tokenizer with a vocabulary size of $32005$. As weight decay increases, the model disproportionately disregards low-frequency tokens, which make up the vast majority of tokens in language datasets. Low-frequency tokens suffer from higher cross-entropy loss and reduced learning speed, while high-frequency tokens remain largely unaffected. Critically, the degradation of low-frequency token performance happens \textit{silently}, as the average training loss, monitored by practitioners, remains largely unchanged across different levels of weight decay.
}
\label{fig:boxplot:apple:large:imdb:large}
\end{figure}


\subsection{Performance Impairment in Low-Frequency Tokens}

{\bf Average vs. Token-Balanced Training Loss. \noindent} To highlight the effect of weight decay on low-frequency tokens, we compared the average training loss with the token-balanced training loss. Figure~\ref{fig:loss:class:rebalance} presents both metrics at the end of training for models trained with varying weight decay values (\(\lambda \in \{0.0, 0.1, 0.3, 0.5, 1.0\}\)). As shown, the token-balanced loss rises sharply with increasing weight decay, while the average training loss shows only a slight increase as $\lambda$ moves from $0.0$ to $1.0$. This discrepancy occurs because weight decay disproportionately affects low-frequency tokens, which are given equal importance in the token-balanced loss. In contrast, the average training loss places much less emphasis on low-frequency tokens, making the effects of weight decay less noticeable. For exact values, including per-token perplexity and training accuracy, see Table~\ref{tab:metrics}.

{\bf Per-Token Performance vs. Weight Decay. \indent} Beyond comparing the average and token-balanced training losses, we also investigated how increasing weight decay influences performance on low- and high-frequency tokens separately. In this experiment, we examine how increasing weight decay affects the per-task loss function for tokens of different frequencies. To this end, we trained multiple versions of the same model (e.g., Apple \texttt{OpenELM} $270$M) with varying degrees of weight decay on a given dataset (e.g., \texttt{IMDB}) and compared the average loss function for low-frequency and high-frequency tokens. As shown in Figure~\ref{fig:quadrant:apple-qwen:imdb}, the performance on high-frequency tokens remains largely unaffected by the increase in weight decay, in contrast to the loss for low-frequency tokens, which increases significantly with higher weight decay. Here, low-frequency tokens are those that appear between \( 3^0 \) and \( 3^1 \) times as the next token in the data, while the highest-frequency tokens appear between \( 3^9 \) and \( 3^{10} \) times as the next token. 

{\bf Per-Token Performance vs. Frequency. \indent} As a next step, we would like to visualize the distribution of per-token losses and accuracies across tokens of varying frequencies. In Figure~\ref{fig:token:loss:acc:comparison}, tokens are grouped into bins, where the $i$th bin contains tokens with frequencies between $3^{i-1}$ and $3^{i}$. For each token, we plotted in Figure~\ref{fig:token:loss:acc:comparison}~(a) its average per-token cross-entropy and in Figure~\ref{fig:token:loss:acc:comparison}~(b) the per-token accuracy, averaging over all sequences where that token is predicted and over all random seeds. As shown, model performance improves significantly and consistently as token frequency increases, with high-frequency tokens benefiting the most. In Figure~\ref{fig:speed:scatter}, we replicated this analysis for per-token learning speed, which also monotonically improves as token frequency increases, following the same pattern observed for loss and accuracy.

{\bf High-Frequency Tokens are Learned Faster. \indent} In Figure~\ref{fig:boxplot:apple:large:imdb:large}~(b), tokens are divided into bins, selected in the same way as before, to compare the average token learning speed (as defined earlier) and its standard deviation for two models: one trained without weight decay ($\lambda = 0$, blue) and the other with weight decay ($\lambda = 1$, orange). The percentage of tokens in each bin is also shown, with the majority concentrated in the low-frequency bins. As observed, less frequent tokens are learned more slowly by both models. Notably, the gap in learning speed between the models widens for lower-frequency tokens but diminishes for higher-frequency tokens. This indicates that increasing weight decay disproportionately deprioritizes low-frequency tokens.

\section{Theoretical Discussion}

It is well known that the class frequency is positively correlated with the norm of the top layer classifier of the given class~\citep{Kang19,huang16,kim20}. For instance, \cite{pmlr-v235-dang24a} considered a variant of unconstrained features model (UFM)~\citep{doi:10.1073/pnas.2103091118}, in which the features are constrained to be non-negative, motivated by the fact that features are usually the output of ReLU activations in many common architectures. We use their framework to analyze the influence of the token frequency and the weight decay on the per-token loss function. Formally, suppose we have a set of possible tokens $\mathcal{V}$ (one-hot encodings of the numbers in $[V]$) and a dataset $\mathcal{D}=\cup^{V}_{k=1}\{x_{k,i}\}^{n_k}_{i=1}$ of sequences $x_{k,i} = (x_{k,i,1},\dots,x_{k,i,n})$ of tokens with the next token being $k \in [V]$ (whose one-hot encoding is $y_k$). Within the model proposed in~\citep{pmlr-v235-dang24a}, for each sequence $x_{k,i}$ we learn an unconstrained feature representation $h_{k,i} \in \mathbb{R}^{d}$ together a linear layer $W \in \mathbb{R}^{V \times d}$ using the Cross-Entropy loss:
\begin{equation}\label{eq:UFM}
\begin{aligned}
\min_{W, H} \quad & \frac{1}{N} \sum_{k=1}^{V} \sum_{i=1}^{n_k} \ell_{\textnormal{CE}}(W h_{k,i}, y_k) + \frac{\lambda_W}{2} \|W\|_F^2 + \lambda_H \|H\|_F^2, \\
\text{s.t.} \quad & H \geq 0,~\lambda_W > 0,~ \lambda_H > 0,
\end{aligned}
\end{equation}
where $\ell_{\textnormal{CE}}(z,y_k) = -\log\left(\fr{\exp(z_k)}{\sum^{V}_{i=1} \exp(z_i)}\right)$, $H := [h_{1,1}, \ldots, h_{1,n_1}, h_{2,1}, \ldots, h_{V,n_V}] \in \mathbb{R}^{d \times N}$ (where $N=\sum^{V}_{k=1}n_k$) are the learned representations for each sample $(k,i)$ and $H \geq 0$ denotes entry-wise non-negativity. In addition, $W = [w_1, w_2, \ldots, w_V]^\top \in \mathbb{R}^{V \times d}$ to be the last-layer weight matrix, with $w_k \in \mathbb{R}^d$ being the $k$-th row of $W$.

While the above formulation does not exactly match the practice (since $H$ is a matrix of free parameters instead of the outputs of a neural network), this abstraction can help understand how the per-token frequency $n_k$ and the regularization parameters $\lambda_W$ and $\lambda_H$ influence the per-token loss and classifier. For instance, according to Theorem~4.1 and Proposition~4.3 in~\citep{pmlr-v235-dang24a}, if $d \geq V$ then any global minimum of Eq.~\ref{eq:UFM} satisfies:
\begin{enumerate}[label=(\alph*),]
    \item Within-class feature collapse: $\forall~ k \in [V],~ i\neq j \in [n_k]:~ h_{k,i} = h_{k,j} = \mu_k$.
\item Class-mean orthogonality: $\forall~ k\neq l:~ \mu_{k}^{\top} \mu_{l} = 0$.
    \item Class-mean norm: $\| \mu_{k} \|^2 =
      \sqrt{\frac{\lambda_W (V-1)}{\lambda_H V n_k}}
      M_k$.
    \item Weight norms: $\|w_k\|^2 =
        \sqrt{\fr{\lambda_H}{\lambda_WV^3(V-1)}} \left(
        (V - 1)^2 \sqrt{n_k} M_k + \sum^{V}_{j=1}
        \sqrt{n_j} M_j\right)$.
\end{enumerate}
Here, $M_k = \left[\log \left( (V - 1) \left(
\fr{\sqrt{n_k}}{N
\sqrt{\frac{V-1}{V} \lambda_{W} \lambda_{H}}}
- 1\right) \right) \right]_{\diamond}$, where the function $[x]_{\diamond}$ returns $x$ if $x$ is defined and is positive and $0$ otherwise.

The above series of observations imply that $\mu_k = 0$ if and only if $M_k=0$. As a result (summarized in Corollary~4.6 in~\citep{pmlr-v235-dang24a}), if $n_k \leq \lambda_W\lambda_HN^2 \fr{V-1}{V}$ (which implies $M_k=0$) the model avoids learning the $k$th token. In particular, \textbf{\textit{the number of tokens that are neglected by the model increased when increasing the level of weight decay}}. 

\begin{restatable}{proposition}{loss}\label{prop:loss}
Suppose $d \geq V$, then any global minimizer $(W, H)$ of the problem obeys $\ell_{\textnormal{CE}}(W h_{k,i}, y_k) ~=~ \log\left(\sum^{V}_{j=1} \exp\left(\fr{M_j}{V^2}\right)\right) - M_k$.
\end{restatable}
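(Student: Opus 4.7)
The plan is to evaluate the cross-entropy at a global minimizer $(W, H)$ of (Eq.~\ref{eq:UFM}) by combining the structural facts (a)--(d) with the first-order stationarity conditions of the problem. By within-class feature collapse (a), $h_{k,i}=\mu_k$, so the per-sample loss depends only on the logits $z_{k,j}:=w_j^\top\mu_k=(W\mu_k)_j$, and
$\ell_{\textnormal{CE}}(W\mu_k, y_k) = \log\sum_j \exp(z_{k,j}) - z_{k,k}$. The proof therefore reduces to an explicit computation of the $V$ logits $z_{k,\cdot}$.

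Stationarity of the objective in $W$ at the optimum gives $\lambda_W W = -\tfrac{1}{N}\sum_l n_l (p_l - y_l)\mu_l^\top$, where $p_l=\softmax(W\mu_l)$. Taking the inner product of the $j$-th row with $\mu_k$ and invoking class-mean orthogonality (b) kills every term with $l\neq k$, leaving
\[
z_{k,j} = \alpha_k\bigl(\delta_{jk} - p_{k,j}\bigr), \qquad \alpha_k := \frac{n_k\|\mu_k\|^2}{N\lambda_W}.
\]
For $j\neq k$ this reads $z_{k,j}=-\alpha_k p_{k,j}$, so the softmax identity $z_{k,j}-z_{k,j'}=\log(p_{k,j}/p_{k,j'})$ forces the strictly increasing map $p\mapsto \log p + \alpha_k p$ to agree on these $V-1$ coordinates. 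Hence $p_{k,j}\equiv q_k$ for every $j\neq k$ and $p_{k,k}=1-(V-1)q_k$, so the logit vector takes only two distinct values and $u_k:=\alpha_k V q_k$ satisfies the one-dimensional fixed point $\alpha_k V = u_k\bigl[(V-1)+e^{u_k}\bigr]$.

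Substituting the closed form (c) for $\|\mu_k\|^2$ into $\alpha_k$ and unwinding the definition of $M_k$ with $B:=N\sqrt{(V-1)\lambda_W\lambda_H/V}$ yields the algebraic identity $\alpha_k V = M_k\bigl[(V-1)+e^{M_k}\bigr]$. Strict monotonicity of $u\mapsto u[(V-1)+e^u]$ on $(0,\infty)$ then forces $u_k = M_k$, so the logits are pinned down entirely in terms of $M_k$, $V$. Plugging the two-value profile into $\log\sum_j\exp(z_{k,j})-z_{k,k}$ and simultaneously invoking the classifier-norm formula (d) to re-express the cross-class contributions $\sum_j\sqrt{n_j}M_j$ as the off-diagonal part of the softmax normalizer then reassembles the loss into $\log\sum_j \exp(M_j/V^2) - M_k$, as claimed.

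The main obstacle is the final packaging step: the natural output of Steps 1--3 is a one-parameter expression in $M_k$ only, whereas the target form is the fully symmetric sum $\sum_j \exp(M_j/V^2)$ over all classes. Bridging this gap requires the simultaneous use of the feature-norm identity (c) and the classifier-norm identity (d), together with the explicit formula for $M_j$, so that the cross-class data enters the softmax normalizer through the $(V-1)^2\sqrt{n_k}M_k + \sum_j\sqrt{n_j}M_j$ structure of $\|w_k\|^2$. A separate subtlety is the boundary case $M_j=0$, where the truncation $[\,\cdot\,]_\diamond$ collapses $\mu_j$ and hence $h_{j,i}$ to zero and the loss reduces to $\log V$; this degenerate case must be reconciled with the closed form and provides a consistency check on the final identity.
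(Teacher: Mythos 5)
Your Steps 1--4 are essentially correct and are a genuinely different (and more self-contained) route than the paper's: the paper simply substitutes the closed-form expression of $w_k$ as a linear combination of the class means from Theorem~4.1 of the cited work and reads off the logits, whereas you re-derive the logits from first-order stationarity in $W$ (which is unconstrained, so its gradient must vanish at any global minimizer) together with within-class collapse (a) and class-mean orthogonality (b). I checked the key identities: $z_{k,j}=\alpha_k(\delta_{jk}-p_{k,j})$ with $\alpha_k=n_k\|\mu_k\|^2/(N\lambda_W)$; the injectivity of $t\mapsto\log t+\alpha_k t$ forcing all off-diagonal probabilities to coincide; and, using (c) and the definition of $M_k$, the identity $(V-1)+e^{M_k}=\frac{\sqrt{n_k}}{N}\sqrt{V(V-1)/(\lambda_W\lambda_H)}$, which gives $\alpha_k V=M_k\bigl[(V-1)+e^{M_k}\bigr]$ and hence $u_k=M_k$. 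All of this holds.

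The genuine gap is your final ``packaging'' step, and it cannot be filled, because your own correct intermediate result is incompatible with the target formula. Your computation yields $z_{k,k}=\frac{V-1}{V}M_k$ and $z_{k,j}=-\frac{M_k}{V}$ for every $j\neq k$: orthogonality (b) kills every term of $W\mu_k$ except the one proportional to $\|\mu_k\|^2$, so the off-diagonal logits are necessarily constant in $j$ and depend on $k$ alone. The loss is therefore $\ell_{\textnormal{CE}}=\log\bigl((V-1)+e^{M_k}\bigr)-M_k$, a function of $M_k$ only. No invocation of the weight-norm identity (d) can turn this into the symmetric normalizer $\sum_{j}\exp(M_j/V^2)$; the two expressions are different functions of $(M_1,\dots,M_V)$ whenever the $M_j$ are not all equal, and your own boundary check already exposes this (at $M_k=0$ your formula gives $\log V$, the target gives $\log\sum_j e^{M_j/V^2}$). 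So the ``reassembly'' you describe is not an argument but an unfulfilled hope. What your derivation actually shows is that the proposition's stated closed form is inconsistent with properties (a)--(c) as given: the paper's unproved step ``similarly, $w_j^\top h_{k,i}=M_j/V^2$'' cannot hold, since under (b) that inner product reduces to a multiple of $\sqrt{n_k}\|\mu_k\|^2$ and cannot depend on $j$. The constructive fix is to state and prove $\ell_{\textnormal{CE}}(Wh_{k,i},y_k)=\log\bigl(V-1+e^{M_k}\bigr)-M_k$ instead; this preserves every qualitative conclusion drawn afterwards (the loss is decreasing in $M_k$, hence in $n_k$, and its sensitivity to $\lambda$ is larger for small $n_k$), but you should flag the discrepancy rather than force your derivation into the stated form.
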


We observe that the loss function \( \ell_{ij} = \ell_{\textnormal{CE}}(W h_{k,i}, y_k) \) can be decomposed into two components: the first part, \( \ell' \), is independent of both \( k \) and \( i \), while the second part depends on \( -M_k \). Consequently, tokens \( k \) associated with larger values of \( M_k \) incur a smaller per-token loss. Since \( M_k \) increases with \( n_k \), it follows that \textbf{\textit{the per-token loss is smaller for tokens of higher frequency}}. This aligns with the results in Fig.~\ref{fig:boxplot:apple:large:imdb:large}~(a), where the loss function decreases monotonically for higher-frequency tokens when training with weight decay.

Now, suppose we set \( \lambda_W = \lambda_H = \lambda \). The derivative of the per-token loss \( \ell_{k,i} \) with respect to \( \lambda \) is given by: $\frac{\partial \ell_{k,i}}{\partial \lambda} = \frac{\partial \ell'}{\partial \lambda} + \left( \frac{1}{\lambda} + \frac{N\sqrt{V-1}}{\sqrt{n_kV} - \lambda N \sqrt{V-1}} \right) \cdot \mathbb{I}\left[n_k > \lambda^2N^2\frac{V-1}{V} \right]$. We note that the first term of the derivative, \( \frac{\partial \ell'}{\partial \lambda} \), is independent of \( n_k \), while the second term is monotonically decreasing with respect to \( n_k \), provided that \( n_k > \lambda^2 N^2 \frac{V-1}{V} \) ($k$ is a non-neglected token). Therefore, the derivative of the loss for non-neglected tokens \( k \) is higher for smaller values of \( n_k \). In particular, \textbf{\textit{the loss for low-frequency tokens grows at a faster rate compared to high-frequency tokens when increasing $\lambda$}}. This can be observed in Fig.~\ref{fig:boxplot:apple:large:imdb:large}~(a) and Fig.~\ref{fig:loss:class:rebalance}, where, although the losses generally increase across all types of tokens as a function of $\lambda$, they increase more significantly for low-frequency tokens.

As a final note, although one might think that the observations made in Sec.~\ref{sec:empirical_results} could be due to poor training, these theoretical results emphasize that they are actually caused by a fundamental issue when training next-token predictors. The results above apply to the global minima of the objective, indicating that they pertain to situations where the training was in fact optimal.

\section{Conclusion}

We investigated the impact of weight decay on token-level learning dynamics in large language models. Our findings reveal critical insights into how weight decay affects the learning process of individual tokens—effects that are hidden when relying solely on aggregated metrics. We demonstrated that increasing weight decay disproportionately harms the performance of low-frequency tokens, even when the overall average loss remains largely unchanged. Additionally, we observed that higher-frequency tokens are generally learned faster than their low-frequency counterparts. This interplay between token frequency, performance, and regularization highlights the nuanced effects of training techniques on different parts of the model's vocabulary.

These results expose a significant oversight in current LLM training practices. Weight decay is commonly employed to reduce overfitting and enhance optimization. While this seems beneficial at first—due to improved convergence and stability in overall loss metrics—our analysis uncovers a hidden pitfall: weight decay can severely compromise the model's ability to handle low-frequency tokens. Crucially, this degradation goes unnoticed when only aggregated loss metrics are considered. This discrepancy between aggregated performance and token-specific learning underscores the need for fine-grained, token-level evaluations. Without such assessments, models risk sacrificing performance on rare or specialized vocabulary, potentially limiting their effectiveness in domains that require precise handling of low-frequency terms.

As illustrated in Figure~\ref{fig:imdb:sidebyside}, the imbalance becomes more pronounced as vocabulary sizes increase. This is particularly concerning as vocabulary sizes are continuously expanded to improve model performance and broaden capabilities~\citep{takase2024largevocabularysizeimproves,tao2024scalinglawsvocabularylarger,Toraman_2023}. For example, while \texttt{LLaMA-v1}~\citep{touvron2023llamaopenefficientfoundation} and \texttt{LLaMA-v2}~\cite{touvron2023llama2openfoundation} used a vocabulary of $32000$ tokens, \texttt{LLaMA-v3}~\citep{dubey2024llama3herdmodels} expanded this to $128256$ tokens, \texttt{Qwen2}~\citep{yang2024qwen2technicalreport} further extended it to $151936$ tokens, and \texttt{Gemma-2}~\citep{gemmateam2024gemma2improvingopen} increased the size to $256128$ tokens.

{\bf Broader Impact. \indent} Our work enhances understanding of how weight decay affects token-level performance in LLMs while remaining undetected by aggregated metrics. This is crucial for ensuring fairness and reliability in real-world applications. While our societal impact is indirect, it underscores the need for more granular evaluation metrics to detect and mitigate potential biases on low-frequency tokens. These findings contribute to the development of more equitable and robust AI systems.



\bibliography{reference}
\bibliographystyle{iclr2025_conference}

\newpage
\appendix

\section{Proof of Proposition~\ref{prop:loss}}

\loss*

\begin{proof}
By combining (a-d) in Theorem~4.1 in~\citep{pmlr-v235-dang24a}, we have:
\begin{small}
\begin{equation*}
\begin{aligned}
w^{\top}_k h_{k,i} ~&=~ w^{\top}_k \mu_k \\
~&=~ \sqrt{\frac{\lambda_{H}}{\lambda_{W} V (V-1)}}
     \left( V \sqrt{n_k} \mu^{\top}_{k} \mu_k - \sum_{m=1}^{V} \sqrt{n_m} \mu^{\top}_m \mu_k \right)\\
~&=~ \sqrt{\frac{\lambda_{H}}{\lambda_{W} V (V-1)}}
     \left( V \sqrt{n_k} \sqrt{\frac{V-1}{V} \frac{\lambda_{W}}{ \lambda_{H}} \frac{1}{n_k}}
      M_k  \right) \\
~&=~ M_k 
\end{aligned}
\end{equation*}
\end{small}
Similarly, we can show that $w^{\top}_{j} h_{k,i} = \frac{M_{j}}{V^2}$. Hence, the loss function is equal to:
\begin{small}
\begin{equation*}
\ell_{\textnormal{CE}}(W h_{k,i}, y_k) ~=~ -\log\left(\fr{\exp(w^{\top}_{k} h_{k,i})}{\sum^{V}_{j=1} \exp(w^{\top}_{j} h_{k,i})} \right) ~=~ \log\left(\sum^{V}_{j=1} \exp\left(\fr{M_j}{V^2}\right)\right) - M_k.
\end{equation*}
\end{small}
\end{proof}

\section{Per-Token Metrics}
This appendix presents Algorithm \ref{alg:token:metrics}, which details our method for computing per-token metrics used throughout this paper. This algorithm is central to our analysis, enabling the calculation of fine-grained token-level performance measures that underpin our study's findings.

\begin{algorithm}[!ht]
\SetAlgoLined
\KwIn{Logits $\mathbf{L}$, Labels $\mathbf{y}$, Tokenizer $\mathcal{T}$}
\KwOut{Token-level metrics $\mathcal{M}$}
Initialize token metrics dictionary $\mathcal{M}$\;
\ForEach{step}{
    Get logits $\mathbf{L}$ and labels $\mathbf{y}$\;
    Shift logits $\mathbf{L}' \gets \mathbf{L}[:, :-1]$\;
    Shift labels $\mathbf{y}' \gets \mathbf{y}[:, 1:]$\;
    Compute predictions $\hat{\mathbf{y}} \gets \arg\max(\mathbf{L}')$\;
    
    \ForEach{token $t$ in $\mathbf{y}'$}{
        Compute per-token loss $\ell_t \gets \mathrm{CrossEntropyLoss}(\mathbf{L}_t, \mathbf{y}_t')$\;
        Update metrics $\mathcal{M}[t] \gets \mathcal{M}[t] + \{\mathrm{loss}: \ell_t, \mathrm{correct}: (\hat{\mathbf{y}}_t = \mathbf{y}_t')\}$\;
    }
}
\caption{Token-level Metric Computation Algorithm}
\label{alg:token:metrics}
\end{algorithm}

\section{IMDB Token Distribution}
As an additional evaluation, we examine the token frequency distribution of the \texttt{IMDB} dataset, as shown in Figure \ref{fig:imdb:histogram}. Using a \texttt{BPE} tokenizer with a 32005-token vocabulary, we illustrate the distribution of high-frequency and low-frequency tokens. The histogram reveals a striking imbalance typical of natural language datasets: a few tokens occur extremely frequently, while most unique tokens appear rarely. This long-tail distribution is a fundamental characteristic of language data.

\begin{figure}[ht]
\centering
\begin{minipage}{0.60\textwidth}
\includegraphics[width=\linewidth]{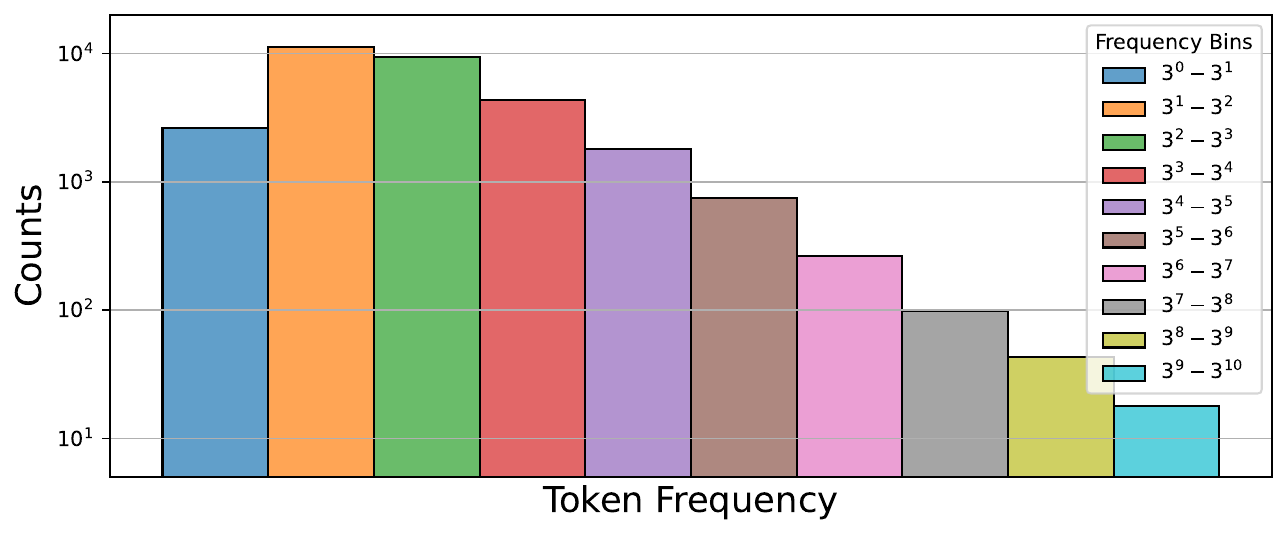}
\end{minipage}%
\hspace{0.01\textwidth}%
\begin{minipage}{0.36\textwidth}
\begin{mdframed}[innerleftmargin=5pt, innerrightmargin=5pt, innertopmargin=5pt, innerbottommargin=5pt]
\textbf{High frequency tokens.} \\ \texttt{the} ($136$k), \texttt{,} ($108$k), \texttt{.} ($99$k),\\ \texttt{a} ($71$k), \texttt{and} ($67$k), \texttt{of} ($63$k)\\
\\\textbf{Low frequency tokens.} \\ \texttt{pilgr} ($1$), \texttt{gna} ($1$), \texttt{cron} ($1$), \texttt{theorists} ($1$), \texttt{vicki} ($1$)
\end{mdframed}
\end{minipage}
\caption{
\textbf{Token frequency distribution of the \texttt{IMDB} dataset}~\cite{maas-etal-2011-learning} using a \texttt{BPE} tokenizer (vocab size $32005$). y-axis: number of unique tokens per frequency bin. x-axis: token frequency bins (logarithmic scale). Rightmost bin shows 3 tokens (\texttt{the}, punctuation) appearing $3^9$ to $3^{10}$ times each. This highlights high-frequency tokens (\texttt{the}, punctuation) and rare tokens ("\texttt{pilgr}", "\texttt{cron}") in lower bins, illustrating the long-tail distribution typical in natural language datasets.
}
\label{fig:imdb:histogram}
\end{figure}

\section{Class Imbalance in Vision}

In order to show that the behavior reported in the main text is not specific to LLMs but holds also for other types of domains, we experimented with image classification. We present an analysis of how dataset balance and weight decay affect the performance of a \texttt{ResNet9} model~\citep{he2016deep} trained for \texttt{CIFAR10}~\citep{krizhevsky2009learning}  classification with imbalanced classes. This highlights the interplay between data distribution, regularization, and model performance. To obtain the imbalanced CIFAR10 dataset, we target a number of samples per class of [91, 142, 222, 347, 541, 845, 1317, 2055, 3205, 5000]. We randomly subsampled the corresponding classes in order to obtain those number of images per class.

\begin{figure}[!ht]
\centering
\begin{tabular}{cc}
\includegraphics[width=0.47\linewidth]{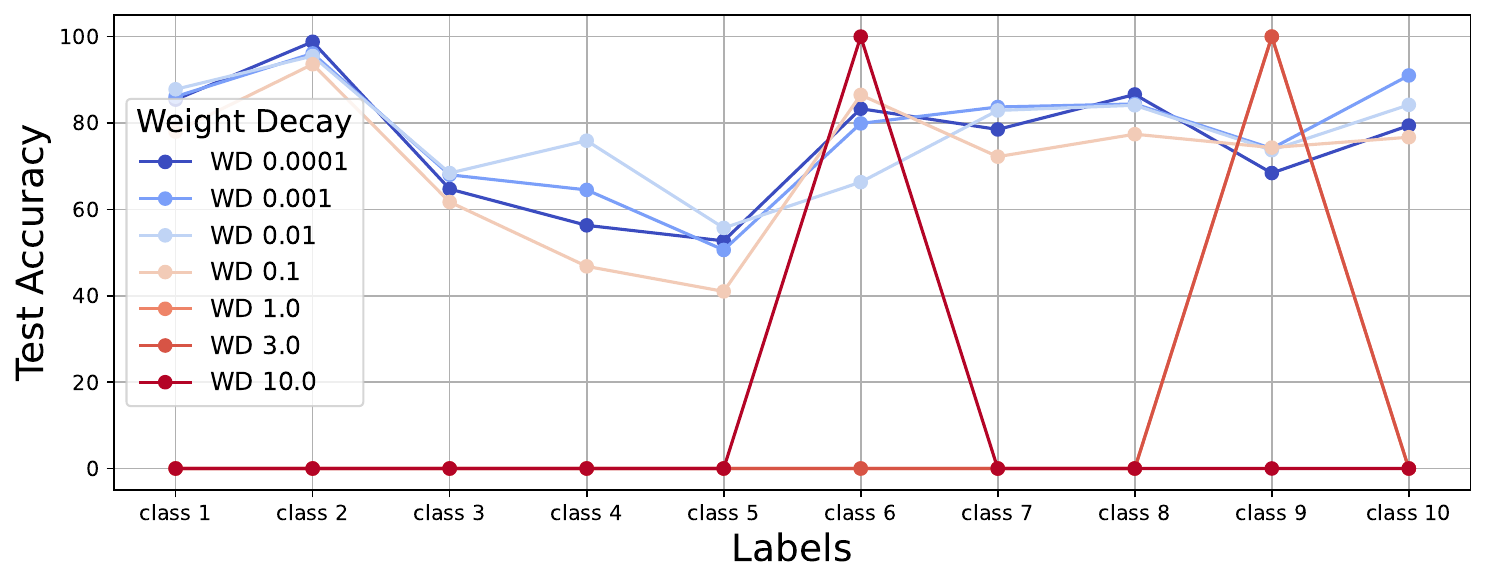} & 
\includegraphics[width=0.47\linewidth]{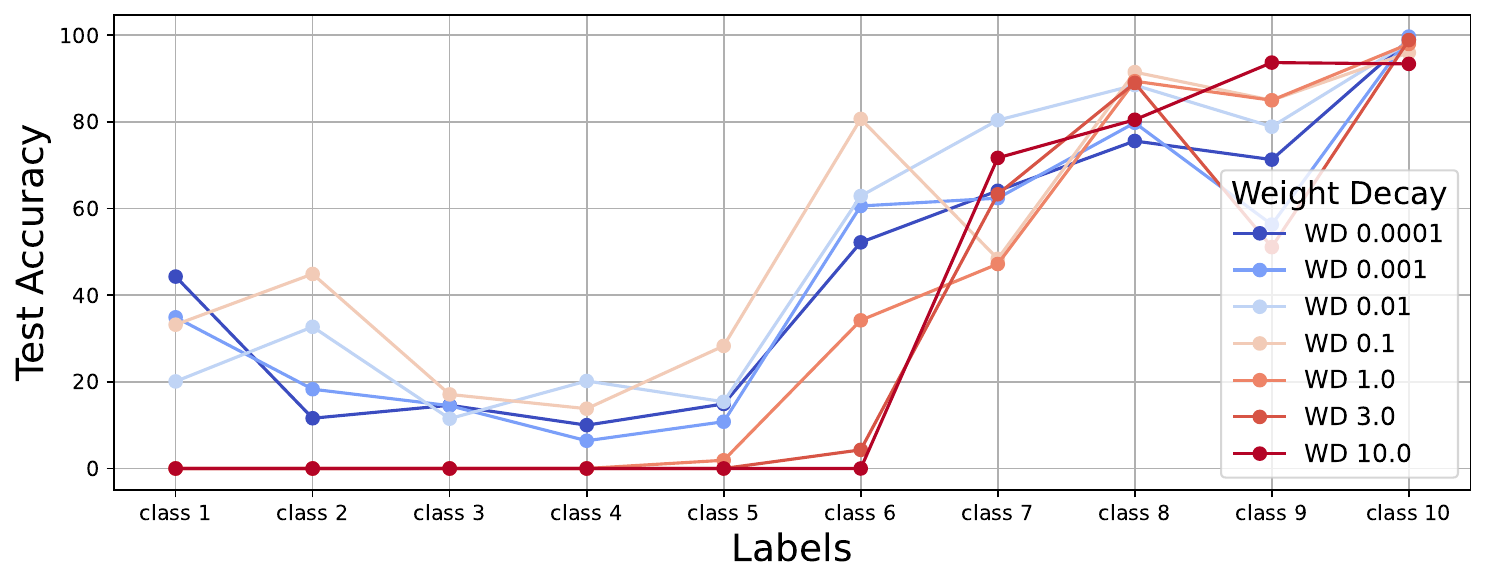} \\
{\small {\bf (a)} Balanced Dataset} & {\small {\bf (b)} Imbalanced Dataset}
\end{tabular}
\caption{
Performance comparison of \texttt{ResNet9} on balanced (left) and imbalanced (right) \texttt{CIFAR10} datasets. The x-axis represents the 10 \texttt{CIFAR10} classes, while the y-axis shows the test accuracy for each class. Different lines correspond to various weight decay values used during training. In the balanced dataset, performance is relatively uniform across classes, with overall homogenous degradation at high weight decay. The imbalanced dataset reveals a clear trend where less frequent classes perform worse, with this effect exacerbated by stronger weight decay values.
}
\label{fig:resnet:inbalance}
\end{figure}

\end{document}